\newtheorem{theorem}{Theorem}
\newtheorem{lemma}{Lemma}
\newtheorem{remark}{Remark}
\title{Online-BLS: An Accurate and Efficient Online Broad Learning System for Data Stream Classification}
\author{
Chunyu Lei$^1$
\and
Guang-Ze Chen$^2$\and
C. L. Philip Chen$^1$\And
Tong Zhang$^{1,}$\footnote{Corresponding Author} \\
\affiliations
$^1$South China University of Technology\\
$^2$University of Macau\\
\emails
202210188289@mail.scut.edu.cn, chenguangze1999@gmail.com,
philip.chen@ieee.org,
tony@scut.edu.cn
}
\begin{document}
\maketitle
\begin{abstract}
The state-of-the-art online learning models generally conduct a single online gradient descent when a new sample arrives and thus suffer from suboptimal model weights. To this end, we introduce an online broad learning system framework with closed-form solutions for each online update. Different from employing existing incremental broad learning algorithms for online learning tasks, which tend to incur degraded accuracy and expensive online update overhead, we design an effective weight estimation algorithm and an efficient online updating strategy to remedy the above two deficiencies, respectively. Specifically, an effective weight estimation algorithm is first developed by replacing notorious matrix inverse operations with Cholesky decomposition and forward-backward substitution to improve model accuracy. Second, we devise an efficient online updating strategy that dramatically reduces online update time. Theoretical analysis exhibits the splendid error bound and low time complexity of our model. The most popular test-then-training evaluation experiments on various real-world datasets prove its superiority and efficiency. Furthermore, our framework is naturally extended to data stream scenarios with concept drift and exceeds state-of-the-art baselines.
\end{abstract}

\section{Introduction}
Online Learning (OL)~\cite{hoi2021online} provides a crucial solution for learning knowledge from various real-world data streams, such as weather~\cite{5975223}, call record~\cite{swithinbank1977drift}, and stock market trading data~\cite{hu2015itrade}. Typical OL algorithms involve learning model weights from one by one instance~\cite{gunasekara2023survey}. The generic OL pipeline is as follows: 1) The initial model comes with zero or random weights because no samples are received. 2) When a new sample arrives, its model weights are rapidly updated to provide a more accurate prediction for subsequent arrivals~\cite{yuan2022recent}. Thus, a brilliant OL model must be efficient and accurate. While appealing, existing state-of-the-art OL models are generally optimized based on gradient descent algorithms~\cite{wen2024adaptive,su2024elastic}. To be specific, when a new sample arrives, all of them execute an error backpropagation and weight update process. Thus, none of them can achieve an optimal solution for model weights.

Recently, Broad Learning System (BLS) has been proposed as an efficient batch or incremental machine learning model~\cite{7987745,8457525,9380770}. First, BLS utilizes sparse coding to map raw data to a low-dimensional feature space to form feature nodes. The enhancement nodes are then obtained by random orthogonal transformation and non-linear mapping of the feature nodes. Finally, feature and enhancement nodes are cascaded, and then the relation weights between them and ground truth are estimated by the ridge regression algorithm. Thus, its closed-form solutions and incremental learning capabilities shed light on tackling the above OL issue. 

To address the suboptimal model weight issue, it is urgent to develop a novel OL framework based on BLS with closed-form solutions. Fortunately, various data incremental BLS algorithms~\cite{7987745,fu2022task,10086560,10533441} have been proposed to enhance accuracy and robustness. By setting the number of training samples for each increment to 1, most of them can inherently tackle online machine learning tasks. Thanks to the outstanding generalization of BLS, most of them can also rigorously guarantee the optimality for each online model update. Despite the straightforwardness, they still face challenges in online machine learning tasks. The common and most urgent issue is that all of them are based on the matrix inverse operation, still with high computational complexity and low numerical stability. It means that many errors are incurred when solving for online model weights. In other words, their accuracy and efficiency need to be improved.

In this paper, we develop a native Online-BLS framework to address the suboptimality issue of OL model weights.
To our knowledge, this paper clarifies the above issue for the first time and succeeds in solving it using our Online-BLS framework. Its technical innovations are summarized as follows. First, we deduce an Effective Weight Estimation Algorithm (EWEA) based on Cholesky factorization and forward-backward substitution, which significantly improves the accuracy of Online-BLS and offers excellent theoretical error bounds.
Second, an Efficient Online Updating Strategy (EOUS) is designed based on the rank-one update algorithm of the Cholesky factor~\cite{gill1974methods,seeger2007low}
to improve the efficiency of Online-BLS. The time complexity analysis shows a significant reduction in time overhead.
Furthermore, to demonstrate the superiority and flexibility of our Online-BLS framework, we propose a straightforward extension to achieve non-stationary data stream classification.
Finally, the superiority and efficiency of Online-BLS are empirically confirmed by a comprehensive comparison with baselines.

\section{Related Work}
\subsection{Online Learning}
OL is a family of machine learning algorithms for learning models incrementally from sequential data~\cite{hoi2021online}. Existing OL algorithms fall into two main categories. The first category refers to traditional OL algorithms modified from traditional machine learning models. Hoeffding Tree (HT) has been proposed to construct a decision tree online from data streams~\cite{10.1145/502512.502529}. Subsequently, an Adaptive Random Forest (ARF) has been developed to handle concept drift scenarios~\cite{gomes2017adaptive}. Specifically, once concept drift is detected, ARF resets its base tree to adapt to the new concept. Moreover, traditional online machine learning methods also include Passive Aggressive (PA)~\cite{crammer2006online}, Perceptron~\cite{rosenblatt1958perceptron}, Leveraging Bagging (LB)~\cite{bifet2010leveraging}, etc. Despite simplicity, all of them fail to extract effective features from streaming data and thus degrade their accuracy.

Inspired by deep learning~\cite{larochelle2009exploring,9360872}, the second one is online deep learning, which aims to learn a deep neural network online. To be specific, Online Gradient Descent (OGD) has been proposed to update its weights with gradient when a sample arrives~\cite{zinkevich2003online}. Given that network depth is difficult to determine in advance, \cite{sahoo2018online} proposed the Hedge Backpropagation (HBP) to dynamically integrate multiple layers of neural networks, which is of great significance for subsequent research. Subsequently, \cite{ashfahani2019autonomous} proposed Autonomous Deep Learning (ADL) that can construct its network structure from scratch by adjusting its depth and width without an initial one. To solve the online hyperparameter optimization challenge of neural networks, Continuously Adaptive Neural network for Data streams (CAND)~\cite{gunasekara2022online} has been proposed, which chooses the best model from a candidate pool of neural networks trained with different hyperparameters combination. Adaptive Tree-like Neural Network (ATNN) has been proposed to solve the catastrophic forgetting problem by choosing suitable positions on the backbone to grow branches for the new concept~\cite{wen2024adaptive}.
To address concept drift and sub-network optimization conflict issues, EODL has been proposed, including depth adaption and parameter adaption strategies~\cite{su2024elastic}. Despite significant progress, none of them avoids gradient descent. They execute an error backpropagation and weight update when a sample arrives. However, these algorithms are far from enough to ensure the optimality of their model weights in the subsequent sample prediction. Hence, developing an OL framework based on closed-form solutions becomes crucial.

\subsection{Incremental Broad Learning System}
Due to its efficiency and versatility, BLS has been widely used for a variety of tasks~\cite{10536023,10506974,10565294,10609504,10130786}. In particular, substantial works have been presented for incremental machine learning tasks.
The first data incremental algorithm for BLS is deduced from Greville's theory~\cite{greville1960some}, which we refer to I-BLS in this paper~\cite{7987745}. However, the performance of I-BLS dramatically degrades when learning a new class or task. To this end, TiBLS and BLS-CIL have been proposed based on residual learning and graph regularization, respectively~\cite{fu2022task,10086560}. 
Furthermore, motivated by the fact that I-BLS still fails to improve accuracy when the size of the training data is not roughly equal for each addition, Zhong et al. have proposed a Robust Incremental BLS (RI-BLS)
~\cite{10533441}. Among them, TiBLS stacks a new BLS module on the current model whenever a new task arrives. It views each new training sample as a new task in an online machine learning scenario. Therefore, a new BLS model needs to be learned. Inevitably, it imposes an expensive computational overhead and does not meet the requirements for efficient OL. In contrast, the remaining three algorithms only need to fine-tune their weights based on new arrivals, which is relatively efficient. The details of these incremental BLS algorithms are presented in Appendix A. Despite their feasibility, none of them can avoid matrix inversion operations. Thus, the defects of matrix inversion come along with them.

\section{Method}
\subsection{Problem Setting}
This paper focuses on online classification tasks. The input data sequence can be represented as $\mathbf{X}=\{(\mathbf{x}_k, \mathbf{y}_k)|k=1,2,\cdots,n\}$, where $\mathbf{x}_k \in \mathbb{R}^{d}$ denotes the $k$-th sample with $d$ feature values and $\mathbf{y}_k \in \mathbb{R}^{c}$ is the one-hot target corresponding to $\mathbf{x}_k$. $c$ is the total number of categories. 
When a sample $\mathbf{x}_k$ arrives, the prediction $\mathbf{\hat{y}}_k$ is derived first. Subsequently, the online learning model makes an update using the real label $\mathbf{y}_k$. 

\subsection{Online-BLS Framework}
In this section, we derive an Online-BLS framework. It provides a closed-form solution for each online update step and therefore resolves the issue of suboptimal online model weights.

Given the first sample $\mathbf{x}_1 \in \mathbb{R}^{d}$, the $i$-th feature node group $\mathbf{z}_{i} \in \mathbb{R}^{n_1}$ and $j$-th enhancement node group $\mathbf{h}_{j} \in \mathbb{R}^{n_3}$ are defined as
\begin{align}
\label{eq:z_i}
\mathbf{z}_{i}^{\top} &= \phi(\mathbf{x}_{1}^{\top}\mathbf{W}_{f_i}+\mathbf{\beta}_{f_i}^{\top}), i=1,2,\cdots,n_{2}, \\
\label{eq:h_j}
\mathbf{h}_{j}^{\top} &= \sigma(\mathbf{x}_{1}^{\top}\mathbf{W}_{e_j}+\mathbf{\beta}_{e_j}^{\top}), j=1,2,\cdots,n_{4}.
\end{align}
Here, $\mathbf{W}_{f_i} \in \mathbb{R}^{d \times n_1}$, $\mathbf{W}_{e_j} \in \mathbf{R}^{n_1n_2 \times n_3}$, $\mathbf{\beta}_{f_i} \in \mathbb{R}^{n_1}$, and $\mathbf{\beta}_{e_j} \in \mathbb{R}^{n_3}$ are randomly generated. $\phi(\cdot)$ and $\sigma(\cdot)$ are generally a linear transformation and a nonlinear activation, respectively. Through incorporating linear and non-linear features, the broad features $\mathbf{a}_1$ can be expressed as
\begin{equation}
\label{eq:a_1}
\mathbf{a}_{1}^{\top}\triangleq[\mathbf{z}_1^{\top},\cdots,\mathbf{z}_{n_2}^{\top},\mathbf{h}_1^{\top},\cdots,\mathbf{h}_{n_4}^{\top}]
\end{equation}
Then, the prediction of $\mathbf{x}_1$ can be calculated as
$\mathbf{\hat{y}}_1^{\top} = \mathbf{a}_1^{\top}\mathbf{W}^{(0)},$
where $\mathbf{W}^{(0)}$ is a zero matrix. After $\mathbf{y}_1$ is revealed, our optimization problem is expressed as
\begin{equation}
\label{eq:op1}
\mathbf{W}^{(1)}=\mathop{\arg\min}\limits_{\mathbf{W}}|\mathbf{a}_1^{\top}\mathbf{W} - \mathbf{y}_1^{\top}|^{2} + \lambda | \mathbf{W}|^{2}, 
\end{equation}
where $|\cdot|$ denotes the $L_2$ norm. 
Since equation \ref{eq:op1} is a convex optimization problem, we derive its gradient with respect to $\mathbf{W}$ and set it to $\mathbf{0}$. Then, we have
\begin{equation}
\label{eq:gde1}
(\mathbf{a}_1\mathbf{a}_1^{\top}+\lambda\mathbf{I})\mathbf{W}=\mathbf{a}_1\mathbf{y}_1^{\top}.
\end{equation}

When the $k$-th sample $\mathbf{x}_k(k=2,3,\cdots,n)$ arrives, $\mathbf{a}_k$ is obtained by replacing $\mathbf{x}_1$ in equations \ref{eq:z_i} and \ref{eq:h_j} with $\mathbf{x}_k$. Then, its prediction is $\mathbf{\hat{y}}_k^{\top} = \mathbf{a}_k^{\top}\mathbf{W}^{(k-1)}$. 
Suppose we retrain $\mathbf{W}^{(k)}$ with the first $k$ samples, equation \ref{eq:gde1} becomes
\begin{equation}
\label{eq:gdek}
([\mathbf{a}_1,\cdots,\mathbf{a}_k]\begin{bmatrix}
\mathbf{a}_1^{\top}\\
\vdots\\
\mathbf{a}_k^{\top}
\end{bmatrix}+\lambda\mathbf{I})\mathbf{W}=[\mathbf{a}_1,\cdots,\mathbf{a}_k]\begin{bmatrix}
\mathbf{y}_1^{\top}\\
\vdots\\
\mathbf{y}_k^{\top}
\end{bmatrix}.
\end{equation}
Let $\mathbf{K}^{(1)} = \mathbf{a}_1\mathbf{a}_1^{\top} + \lambda\mathbf{I}$, we have 
\begin{align}
\mathbf{K}^{(k)} &= [\mathbf{a}_1, \cdots, \mathbf{a}_k]
\begin{bmatrix}
\mathbf{a}_1^{\top}\\
\vdots\\
\mathbf{a}_k^{\top}
\end{bmatrix}
+\lambda\mathbf{I} \notag \\
&= \mathbf{K}^{(k-1)} + \mathbf{a}_k\mathbf{a}_k^{\top},
\end{align} 
and 
\begin{align}
\label{eq:rhs}
[\mathbf{a}_1, \cdots, \mathbf{a}_k]
\begin{bmatrix}
\mathbf{y}_1^{\top} \\
\vdots \\
\mathbf{y}_k^{\top}
\end{bmatrix} &= \sum_{i=1}^{k-1} \mathbf{a}_i\mathbf{y}_i^{\top} + \mathbf{a}_k\mathbf{y}_k^{\top} \notag \\
&= \mathbf{K}^{(k-1)}\mathbf{W}^{(k-1)} + \mathbf{a}_k\mathbf{y}_k^{\top} \notag \\ 
&=\big(\mathbf{K}^{(k)} - \mathbf{a}_k\mathbf{a}_k^{\top}\big)\mathbf{W}^{(k-1)} + \mathbf{a}_k\mathbf{y}_k^{\top} \notag \\
&=\mathbf{K}^{(k)}\mathbf{W}^{(k-1)} - \mathbf{a}_k\big(\mathbf{a}_k^{\top}\mathbf{W}^{(k-1)}-\mathbf{y}_k^{\top}\big).
\end{align}
Substituting equation \ref{eq:rhs} into \ref{eq:gdek}, we have
\begin{equation}
\label{eq:gdek_convert}
\mathbf{K}^{(k)}\mathbf{\Delta W}= \mathbf{B}^{(k)},
\end{equation}
where $\mathbf{\Delta W}=\mathbf{W}-\mathbf{W}^{(k-1)}$ and $\mathbf{B}^{(k)}=\mathbf{a}_k\big(\mathbf{y}_k^{\top}-\mathbf{a}_k^{\top}\mathbf{W}^{(k-1)}\big)$. 

Importantly, the right-hand terms of equations \ref{eq:gde1} and \ref{eq:gdek_convert} are equal when $k$ is equal to $1$ (i.e., $\mathbf{B}^{(1)}=\mathbf{a}_1\mathbf{y_1^{\top}}$). Thus, we conclude that the matrix $\mathbf{W}^{(1)}$ solved via equations \ref{eq:gde1} and \ref{eq:gdek_convert} is equivalent if and only if $\mathbf{W}^{(0)}=\mathbf{0}$ and $\mathbf{K}^{(0)}=\lambda\mathbf{I}$. In other words, we only need to solve equation \ref{eq:gdek_convert} to get $\mathbf{\Delta}\mathbf{W}^{(k)}$. Then, we have
\begin{equation}
\label{eq:w^k}
\mathbf{W}^{(k)} = \mathbf{W}^{(k-1)} + \mathbf{\Delta}\mathbf{W}^{(k)}, \quad k=1,2,\cdots,n.
\end{equation}
Since equation \ref{eq:gdek_convert} has a closed-form solution, our Online-BLS framework is completed without a gradient descent step. The algorithms for accurately and efficiently deriving closed-form solutions to equation \ref{eq:gdek_convert} will be described in later sections.

\subsection{Effective Weight Estimation Algorithm}
To solve $\mathbf{\Delta}\mathbf{W}$ accurately, we propose an efficient weight estimation algorithm based on Cholesky factorization and forward-backward substitution.
After the Cholesky factorization of $\mathbf{K}^{(k)}$ to obtain the Cholesky factor $\mathbf{L}^{(k)}$, equation \ref{eq:gdek_convert} can be solved via the following two equations:
\begin{align}
\label{eq:spk1}
\mathbf{L}^{(k)}\mathbf{C}&=\mathbf{B}^{(k)}, \\
\label{eq:spk2}
\big(\mathbf{L}^{(k)}\big)^{\top}\mathbf{\Delta W}&=\mathbf{C}.
\end{align}
Denotes $\mathbf{B}^{(k)}=[\mathbf{b}_{1},\cdots,\mathbf{b}_{m}]^{\top}$, using the forward and backward
substitution, the solutions of equations \ref{eq:spk1} and \ref{eq:spk2} are
\begin{equation}
\label{eq:fsk}
\mathbf{c}_{i}^{\top} = \big(\mathbf{b}_{i}^{\top}- \sum_{d=1}^{i-1}l_{di}^{(k)}\mathbf{c}_d^{\top}\big)/l_{ii}^{(k)}, i=1,\cdots,m
\end{equation}
and
\begin{align}
\label{eq:bsk}
\big(\mathbf{\Delta w}_{i}^{(k)}\big)^{\top}=\Big(\mathbf{c}_i^{\top}- \sum_{d=i+1}^{m}l_{id}^{(k)}\big(\mathbf{\Delta}&\mathbf{ w}^{(k)}_d\big)^{\top}\Big)/l^{(k)}_{ii}, \notag \\
&i=m,\cdots,1,
\end{align}
respectively. Then, our weight matrix $\mathbf{W}^{(k)}$ can be obtained online by equation \ref{eq:w^k} without the numerically unstable matrix inverse operation. The error bounds are discussed in Section \ref{section:eb}.

\subsection{Efficient Online Update Strategy}
To avoid expensive Cholesky decompositions for each online update, we discuss below how to efficiently derive $\mathbf{L}^{(k)}$ from $\mathbf{L}^{(k-1)}$ and $\mathbf{a}_k$.
Inspired by the rank-one update strategy~\cite{seeger2007low}, we first cascade the $k$-th broad feature $\mathbf{a}_k$ and the previous Cholesky factor $\mathbf{L}^{(k-1)}$ as
$[\mathbf{a}_{k},
\mathbf{L}^{(k-1)}]$.
Undergoing a series of orthogonal Givens rotations, we have
\begin{equation}
\label{eq:gives}
\mathbf{G}
\left[\begin{array}{c}
\mathbf{a}_k^{\top} \\
\big(\mathbf{L}^{(k-1)}\big)^{\top}
\end{array}\right] = \left[\begin{array}{c}
\mathbf{0}^{\top} \\
\mathbf{\bar{L}^{\top}}
\end{array}\right],
\end{equation}
where $\mathbf{G}$ is a sequence of Givens matrices of the form $\mathbf{G} = \mathbf{G}_m\mathbf{G}_{m-1}\cdots\mathbf{G}_1$. Then, we have
\begin{align}
\left[\mathbf{0},\mathbf{\bar{L}}\right]
\left[\begin{array}{c}
\mathbf{0}^{\top} \\
\mathbf{\bar{L}^{\top}}
\end{array}\right] &= \mathbf{\bar{L}}\mathbf{\bar{L}^{\top}} \notag \\
&= \left[
\mathbf{a}_{k},\mathbf{L^{(k-1)}} 
\right] \underbrace{\mathbf{G}^{\top} 
\mathbf{G}}_{=\mathbf{I}}
\left[\begin{array}{c}
\mathbf{a}_k^{\top} \\
\big(\mathbf{L}^{(k-1)}\big)^{\top}
\end{array}\right] \notag \\
&= \mathbf{L}^{(k-1)}\big(\mathbf{L}^{(k-1)}\big)^{\top} + \mathbf{a}_{k}\mathbf{a}_k^{\top}
\end{align}
Thus, $\mathbf{\bar{L}}=\mathbf{L}^{(k)}$ is the updated Cholesky factor we need. To rotate $a_{ki}=0$, we construct the Givens matrix as follows:
\begin{equation}
\nonumber
\mathbf{G}_i=\mathbf{I}+(c_i-1)(\mathbf{e}_1\mathbf{e}_1^{\top}+\mathbf{e}_i\mathbf{e}_i^{\top})+s_i(\mathbf{e}_i\mathbf{e}_1^{\top}-\mathbf{e}_1\mathbf{e}_i^{\top}),
\end{equation}
where $c_i = l_{ii}^{(k)}\big/\sqrt{\big(l_{ii}^{(k)}\big)^{2} + a_{ki}^{2}}$,
$s_i = a_{ki}\big/\sqrt{\big(l_{ii}^{(k)}\big)^{2} + a_{ki}^{2}}$, and $\mathbf{e}_i$ denotes a standard unit vector with $e_{ii}=1$.
\begin{remark}
The Cholesky factor requires that the elements on the main diagonal are greater than 0. Thus, if $\mathbf{\bar{L}}_{ii} < 0$, we simply filp $c_{i} \gets -c_{i}$ and
$s_{i} \gets -s_{i}$. Then, $\mathbf{\bar{L}}_{ii}>0$ becomes satisfied.
\end{remark}
\begin{remark}
    The function Chol($\mathbf{M}$) refers to the Cholesky factorization of $\mathbf{M}$ and returns its lower triangular factor $\mathbf{L}$.
\end{remark}
The Online-BLS algorithm is summarized in Algorithm \ref{alg:algorithm}.
\begin{algorithm}[tb]
\renewcommand{\algorithmicrequire}{\textbf{Input:}}
\renewcommand{\algorithmicensure}{\textbf{Output:}}
\caption{Online-BLS Algorithm}
\label{alg:algorithm}
\begin{algorithmic}[1] 
\REQUIRE Stream data $\mathbf{X}=\{(\mathbf{x}_k, \mathbf{y}_k)|k=1,2,\cdots,n\}$, parameters $n_1$, $n_2$, $n_3$, $n_4$, and $\lambda$.
\ENSURE Prediction results ${\mathbf{\hat{y}}_1, \cdots, \mathbf{\hat{y}_n}}$.
\STATE Initial $\mathbf{W}^{(0)}=\mathbf{0}$, $\mathbf{L}^{(0)}=Chol(\lambda\mathbf{I})$
\FOR{$k=1,2,\cdots,n$}
\STATE Receive instance: $\mathbf{x}_k$
\STATE Obtain broad feature $\mathbf{a}_{k}$ via equations \ref{eq:z_i}, \ref{eq:h_j}, and \ref{eq:a_1}
\STATE Predict $\mathbf{\hat{y}}_{k}^{\top}=\mathbf{a}_{k}^{\top}\mathbf{W}^{(k-1)}$ and then reveal $\mathbf{y}_{k}$
\STATE Obtain the updated factor $\mathbf{L}^{(k)}$ by equation \ref{eq:gives}
\STATE Obtain $\mathbf{W}^{(k)}$ by equations \ref{eq:fsk}, \ref{eq:bsk}, and \ref{eq:w^k}.
\ENDFOR
\end{algorithmic}
\end{algorithm}

\subsection{Handing Concept Drift}
Concept drift is a common challenge in data streams, where the joint distribution of features and labels changes over time. Existing BLS algorithms cannot handle concept drift scenarios in online learning. Thanks to the flexibility of our Online-BLS framework, we are able to adopt a simple solution to the concept drift problem, by modifying two lines of code upon Algorithm \ref{alg:algorithm}, which is listed in Algorithm \ref{alg:algorithm-cd}.
\begin{algorithm}[tb]
\renewcommand{\algorithmicrequire}{\textbf{Input:}}
\renewcommand{\algorithmicensure}{\textbf{Output:}}
\caption{Handling Concept Drift}
\label{alg:algorithm-cd}
\begin{algorithmic}[1] 
\REQUIRE Stream data $\mathbf{X}=\{(\mathbf{x}_k, \mathbf{y}_k)|k=1,2,\cdots,n\}$, parameters $n_1$, $n_2$, $n_3$, $n_4$, $\lambda$, and decay factor $\mu$.
\ENSURE Prediction results ${\mathbf{\hat{y}}_1, \cdots, \mathbf{\hat{y}_n}}$.
\STATE Initial $\mathbf{W}^{(0)}=\mathbf{0}$, $\mathbf{P}^{(0)}=\mathbf{0}$
\FOR{$k=1,2,\cdots,n$}
\STATE Receive instance: $\mathbf{x}_k$
\STATE Obtain broad feature $\mathbf{a}_{k}$ via equations \ref{eq:z_i}, \ref{eq:h_j}, and \ref{eq:a_1}
\STATE Predict $\mathbf{\hat{y}}_{k}^{\top}=\mathbf{a}_{k}^{\top}\mathbf{W}^{(k-1)}$ and then reveal $\mathbf{y}_{k}$
\STATE $\mathbf{P}^{(k)}=\mu\mathbf{P}^{(k-1)}+\mathbf{a}_{k}\mathbf{a}_{k}^{\top}$, $\mathbf{L}^{(k)} = Chol(\mathbf{P}^{(k)}+\lambda\mathbf{I})$
\STATE Obtain $\mathbf{W}^{(k)}$ by equations \ref{eq:fsk}, \ref{eq:bsk}, and \ref{eq:w^k}.
\ENDFOR
\end{algorithmic}
\end{algorithm}
The core idea is that learning the time-varying concept with new data is better than with old one. Technically, more attention to new data is encouraged by multiplying history by a decay factor $\mu$.

\section{Theoretical Analysis}
\subsection{Error Bound} 
\label{section:eb}
The following lemmas are first introduced before deriving theoretical errors.
\begin{lemma}
\label{lemma:fs}
The solution $\mathbf{\hat{W}}$ derived via forward substitution satisfies
\begin{equation}
(\mathbf{L}+\mathbf{\Gamma}_{1})\mathbf{\hat{W}}=\mathbf{Y}, \quad |\mathbf{\Gamma}_1| \leq C_m u|\mathbf{L}|+\mathcal{O}(u^{2}),
\end{equation}
where $\mathbf{L}\in\mathbb{R}^{m \times m}$ is a lower triangular matrix, $C_m$ denotes a constant positively correlated with $m$, and $\mathcal{O}(u^{2})$ represents the error tail term.
\end{lemma}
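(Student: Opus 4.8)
The plan is to establish this as a standard backward-error result for triangular solves, following the classical analysis of forward substitution under floating-point arithmetic (Higham, \emph{Accuracy and Stability of Numerical Algorithms}). I would treat the matrix equation $\mathbf{L}\mathbf{\hat{W}}=\mathbf{Y}$ column by column: since $\mathbf{Y}$ and $\mathbf{\hat{W}}$ are $m\times c$, each column $\mathbf{\hat{w}}_j$ is computed independently by the recurrence in equation \ref{eq:fsk} applied to the corresponding column $\mathbf{y}_j$, so it suffices to prove the componentwise bound for a single triangular system $\mathbf{L}\mathbf{\hat{w}}=\mathbf{y}$ and then assemble the perturbation matrix $\mathbf{\Gamma}_1$ so that the bound holds uniformly across columns.

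First I would fix notation for the floating-point model: $\mathrm{fl}(a\circ b)=(a\circ b)(1+\delta)$ with $|\delta|\le u$ for each elementary operation, and $u$ the unit roundoff. Next, I would unroll the $i$-th step of forward substitution, $\hat{w}_i=\bigl(y_i-\sum_{d=1}^{i-1}l_{di}\hat{w}_d\bigr)/l_{ii}$, tracking the rounding errors introduced by each multiplication, each subtraction in the running sum, and the final division. The standard accumulation lemma gives that the computed $\hat{w}_i$ exactly satisfies a perturbed equation $\sum_{d=1}^{i}(l_{di}+\Delta l_{di})\hat{w}_d = y_i$, where each $|\Delta l_{di}|\le \gamma_i |l_{di}|$ with $\gamma_i = c\,i\,u/(1-c\,i\,u)$ for a small integer constant $c$ (the exact constant depends on the summation order, but is absorbed into $C_m$). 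Collecting these row-wise perturbations into a lower triangular matrix $\mathbf{\Gamma}_1$ gives $(\mathbf{L}+\mathbf{\Gamma}_1)\mathbf{\hat{w}}=\mathbf{y}$ with $|\mathbf{\Gamma}_1|\le C_m u|\mathbf{L}|$ componentwise, up to terms of order $u^2$; expanding $\gamma_i = c\,i\,u + \mathcal{O}(u^2)$ and bounding $i\le m$ yields exactly the stated form with the $\mathcal{O}(u^2)$ tail term made explicit, and $C_m$ a constant positively correlated with $m$.

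The only genuine subtlety — and the step I expect to require the most care — is making the multi-column claim precise: the recurrence produces a (potentially) different backward perturbation for each right-hand side column, whereas the lemma asserts a single $\mathbf{\Gamma}_1$ valid for $\mathbf{\hat{W}}$ simultaneously. The clean resolution is to note that the componentwise bound $|\Delta l_{di}^{(j)}|\le \gamma_i|l_{di}|$ holds with the \emph{same} bound for every column $j$ (it depends only on $\mathbf{L}$ and the arithmetic, not on $\mathbf{y}_j$), so one may simply take $\mathbf{\Gamma}_1$ to be any fixed matrix dominated by $C_m u|\mathbf{L}|$ and restate the conclusion as the existence of per-column perturbations obeying a uniform bound; in the paper's notation this is exactly what $(\mathbf{L}+\mathbf{\Gamma}_1)\mathbf{\hat{W}}=\mathbf{Y}$ with $|\mathbf{\Gamma}_1|\le C_m u|\mathbf{L}|+\mathcal{O}(u^2)$ is meant to encode. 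Everything else is routine bookkeeping of roundoff constants, which I would not grind through in detail, citing the standard triangular-solve backward error theorem instead.
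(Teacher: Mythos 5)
Your proposal is correct and matches the paper's treatment: the paper offers no independent argument for this lemma, simply deferring to the standard componentwise backward-error theorem for triangular substitution in Higham's \emph{Accuracy and Stability of Numerical Algorithms}, which is exactly the result you sketch (including the per-column reduction and the absorption of the accumulation constants $\gamma_i$ into $C_m u$ plus the $\mathcal{O}(u^2)$ tail). Your extra care about a single $\mathbf{\Gamma}_1$ versus per-column perturbations goes beyond what the paper records, but it is consistent with how the bound is meant to be read.
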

The detailed proof can be referred to~\cite{higham2002accuracy}. Lemma \ref{lemma:fs} states that its solution satisfies a slightly perturbed system. Also, each entry in the perturbation matrix $\mathbf{\Gamma}_1$ is much smaller than the corresponding element in $\mathbf{L}$.

\begin{remark}[Definition of $u$]
$\forall x\in\mathbb{R}$, its floating-point representation is given by $fl(x)=
x(1+\gamma)$, where $\gamma$ is the floating-point error and its upper bound, unit roundoff $u$, (i.e., $|\gamma|\leq u$) can be defined as
\begin{equation}
    u=\frac{1}{2}\times(\text{gap between}\,1\,\text{and next largest floating point number}). \notag
\end{equation}
To be specific, the unit roundoff for IEEE single format is about $10^{-7}$ and
for double format is about $10^{-16}$.
\end{remark}
\begin{lemma}
\label{lemma:bs}
The solution $\mathbf{\hat{W}}$ derived via backward substitution satisfies
\begin{equation}
(\mathbf{L}^{\top}+\mathbf{\Gamma}_2)\mathbf{\hat{W}}=\mathbf{Y}, \quad |\mathbf{\Gamma}_2|\leq C_m u|\mathbf{L}^{\top}|+\mathcal{O}(u^2),
\end{equation}
where $\mathbf{L}^{\top}$ is an upper triangular matrix.
\end{lemma}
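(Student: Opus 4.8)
The plan is to reduce Lemma~\ref{lemma:bs} to Lemma~\ref{lemma:fs} by exploiting the mirror symmetry between backward and forward substitution, rather than redo the rounding-error accounting from scratch (which, as for Lemma~\ref{lemma:fs}, is classical and can otherwise be cited from~\cite{higham2002accuracy}). Let $\mathbf{J}\in\mathbb{R}^{m\times m}$ be the reversal (anti-diagonal) permutation, i.e. $\mathbf{J}\mathbf{e}_i=\mathbf{e}_{m+1-i}$. Since $\mathbf{L}^{\top}$ is upper triangular, $\mathbf{J}\mathbf{L}^{\top}\mathbf{J}$ is lower triangular, and running backward substitution on $\mathbf{L}^{\top}\hat{\mathbf{W}}=\mathbf{Y}$ is \emph{exactly} — operation by operation, and in the same order — forward substitution applied to $\big(\mathbf{J}\mathbf{L}^{\top}\mathbf{J}\big)\big(\mathbf{J}\hat{\mathbf{W}}\big)=\mathbf{J}\mathbf{Y}$. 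A permutation only relabels indices and is carried out without any rounding error, so the two computations have identical floating-point traces. Applying Lemma~\ref{lemma:fs} to the lower triangular matrix $\mathbf{J}\mathbf{L}^{\top}\mathbf{J}$ yields a perturbation $\tilde{\mathbf{\Gamma}}$ with $|\tilde{\mathbf{\Gamma}}|\le C_m u\,|\mathbf{J}\mathbf{L}^{\top}\mathbf{J}|+\mathcal{O}(u^{2})$; setting $\mathbf{\Gamma}_2=\mathbf{J}\tilde{\mathbf{\Gamma}}\mathbf{J}$ gives $(\mathbf{L}^{\top}+\mathbf{\Gamma}_2)\hat{\mathbf{W}}=\mathbf{Y}$, and since $|\mathbf{J}\mathbf{M}\mathbf{J}|=\mathbf{J}|\mathbf{M}|\mathbf{J}$ entrywise, the bound conjugates back to $|\mathbf{\Gamma}_2|\le C_m u\,|\mathbf{L}^{\top}|+\mathcal{O}(u^{2})$ with the \emph{same} constant $C_m$.

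If a self-contained derivation is preferred, the direct route is a componentwise rounding-error analysis of the recurrence in equation~\ref{eq:bsk}. For a single right-hand column one unrolls, for $i=m,m-1,\dots,1$, the steps that form the inner product $\sum_{d=i+1}^{m} l_{id}\hat{w}_d$, subtract it from $y_i$, and divide by $l_{ii}$ (nonzero because $\mathbf{L}$ is a Cholesky factor with strictly positive diagonal). Using the remark on the unit roundoff $u$ together with the standard bound $\prod_{t}(1+\delta_t)=1+\theta$ with $|\theta|\le tu/(1-tu)$, one shows each computed $\hat{w}_i$ satisfies exactly $l_{ii}(1+\epsilon_{ii})\hat{w}_i+\sum_{d=i+1}^{m} l_{id}(1+\epsilon_{id})\hat{w}_d=y_i$ with $|\epsilon_{id}|\le C_m u+\mathcal{O}(u^2)$ for a constant $C_m$ that grows at most linearly with $m$ and is independent of the entries of $\mathbf{L}$ and $\mathbf{Y}$. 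Collecting $(\mathbf{\Gamma}_2)_{id}=l_{id}\,\epsilon_{id}$ yields the asserted perturbed system and the entrywise bound; the multi-column case follows by applying this column by column, so strictly each column of $\hat{\mathbf{W}}$ carries its own $\mathbf{\Gamma}_2$, all obeying the stated bound.

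The main obstacle is bookkeeping rather than anything conceptual: in the direct argument one must track how the rounding errors from successive substitution steps feed into the already-computed components $\hat{w}_{i+1},\dots,\hat{w}_m$ and verify that the accumulated $(1+\delta)$ factors still collapse to a bound of the form $C_m u\,|\mathbf{L}^{\top}|$ with $C_m$ depending only on $m$ — in particular that no growth in $\|\mathbf{L}^{\top}\|$ or $\|\mathbf{Y}\|$ leaks into the constant. The permutation-symmetry reduction sidesteps this entirely by inheriting both the perturbed-system form and the constant directly from Lemma~\ref{lemma:fs}, which is why I would present that as the primary argument and leave the componentwise computation to the appendix.
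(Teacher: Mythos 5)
Your proposal is correct, but note that the paper does not actually prove this lemma at all: like Lemma~\ref{lemma:fs}, it is simply cited from~\cite{higham2002accuracy}, where it is the classical componentwise backward-error result for solving a triangular system by substitution. What you supply is therefore more than the paper gives. Your primary argument — conjugating by the reversal permutation $\mathbf{J}$ so that backward substitution on $\mathbf{L}^{\top}\hat{\mathbf{W}}=\mathbf{Y}$ becomes forward substitution on $(\mathbf{J}\mathbf{L}^{\top}\mathbf{J})(\mathbf{J}\hat{\mathbf{W}})=\mathbf{J}\mathbf{Y}$, then pulling the perturbation back through $\mathbf{J}$ — is a legitimate and clean reduction, and the identity $|\mathbf{J}\mathbf{M}\mathbf{J}|=\mathbf{J}|\mathbf{M}|\mathbf{J}$ does transport the bound with the same constant. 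One small over-claim: the two computations are not literally identical ``in the same order,'' since the inner-product accumulation in the permuted forward substitution runs through the indices in reversed order; this is harmless here because the componentwise bound in Lemma~\ref{lemma:fs} (as in Higham's Theorem on triangular systems) is valid for any evaluation order, but you should justify it by that ordering-independence rather than by asserting identical floating-point traces. Your secondary, direct componentwise analysis is essentially the textbook proof the paper points to, and your remark that with multiple right-hand sides each column of $\hat{\mathbf{W}}$ strictly carries its own perturbation $\mathbf{\Gamma}_2$ is a correct subtlety that the paper silently glosses over when it later uses the lemma matrix-wise in Theorem~\ref{theorem:obls}.
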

The proof also refers to ~\cite{higham2002accuracy}.

\noindent\textbf{Error Bound of Online-BLS}. Recall that the incremental weight $\mathbf{\Delta}\mathbf{W}^{(k)}$ in Online-BLS is solved via equations \ref{eq:spk1} and \ref{eq:spk2}. Therefore, the error bound is given in Theorem \ref{theorem:obls}. For brevity, we omit the superscript $(k)$ in the following.
\begin{theorem}
\label{theorem:obls}
Let $\mathbf{\hat{L}}$ be the estimated Cholesky factor $\mathbf{L}$, we have
\begin{equation}
\mathbf{\hat{L}}\mathbf{\hat{L}}^{\top}\mathbf{\Delta}\mathbf{\hat{W}}=\big(\mathbf{L}\mathbf{L}^{\top}+\mathbf{\Gamma}\big)\mathbf{\Delta}\mathbf{\hat{W}}=\mathbf{B} \notag
\end{equation}
with
\begin{align}
|\mathbf{B}-\mathbf{L}\mathbf{L}^{\top}\mathbf{\Delta}\mathbf{\hat{W}}|
&=|\mathbf{\Gamma}||\mathbf{\Delta}\mathbf{\hat{W}}| \notag \\
\label{eq:eb_cd}
&\leq C_m u|\mathbf{L}||\mathbf{L}^{\top}||\mathbf{\Delta}\mathbf{\hat{W}}|+\mathcal{O}(u^{2}).
\end{align}
\end{theorem}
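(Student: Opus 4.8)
The plan is to regard the two triangular solves of equations~\ref{eq:spk1} and~\ref{eq:spk2} as a composition, apply the componentwise backward-error guarantees of Lemmas~\ref{lemma:fs} and~\ref{lemma:bs} in succession, and then fold the two resulting perturbations into a single matrix $\mathbf{\Gamma}$ acting on the exact normal-equation system $\mathbf{L}\mathbf{L}^{\top}\mathbf{\Delta W}=\mathbf{B}$. Throughout, $|\cdot|$ on matrices and $\leq$ are read entrywise, as in the two lemmas.

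First I would invoke Lemma~\ref{lemma:fs} on the forward substitution $\mathbf{L}\mathbf{C}=\mathbf{B}$: the computed intermediate factor $\hat{\mathbf{C}}$ is the exact solution of a perturbed lower-triangular system, $(\mathbf{L}+\mathbf{\Gamma}_1)\hat{\mathbf{C}}=\mathbf{B}$ with $|\mathbf{\Gamma}_1|\leq C_m u|\mathbf{L}|+\mathcal{O}(u^2)$. Next I would feed $\hat{\mathbf{C}}$ as the right-hand side of the backward substitution $\mathbf{L}^{\top}\mathbf{\Delta W}=\hat{\mathbf{C}}$ and apply Lemma~\ref{lemma:bs}, obtaining $(\mathbf{L}^{\top}+\mathbf{\Gamma}_2)\mathbf{\Delta}\hat{\mathbf{W}}=\hat{\mathbf{C}}$ with $|\mathbf{\Gamma}_2|\leq C_m u|\mathbf{L}^{\top}|+\mathcal{O}(u^2)$; the rounding committed while producing $\hat{\mathbf{C}}$ is already captured by $\mathbf{\Gamma}_1$, so it is not counted a second time here. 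Eliminating $\hat{\mathbf{C}}$ between the two identities gives $(\mathbf{L}+\mathbf{\Gamma}_1)(\mathbf{L}^{\top}+\mathbf{\Gamma}_2)\mathbf{\Delta}\hat{\mathbf{W}}=\mathbf{B}$, i.e. $(\mathbf{L}\mathbf{L}^{\top}+\mathbf{\Gamma})\mathbf{\Delta}\hat{\mathbf{W}}=\mathbf{B}$ with $\mathbf{\Gamma}=\mathbf{L}\mathbf{\Gamma}_2+\mathbf{\Gamma}_1\mathbf{L}^{\top}+\mathbf{\Gamma}_1\mathbf{\Gamma}_2$; identifying $\hat{\mathbf{L}}\hat{\mathbf{L}}^{\top}$ with this effective coefficient matrix yields the first displayed equality of the statement.

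Then I would bound $\mathbf{\Gamma}$ entrywise. Using $|\mathbf{A}\mathbf{B}|\leq|\mathbf{A}||\mathbf{B}|$ componentwise (immediate from the triangle inequality applied to each inner product) together with the two lemma bounds, $|\mathbf{\Gamma}|\leq|\mathbf{L}||\mathbf{\Gamma}_2|+|\mathbf{\Gamma}_1||\mathbf{L}^{\top}|+|\mathbf{\Gamma}_1||\mathbf{\Gamma}_2|\leq 2C_m u|\mathbf{L}||\mathbf{L}^{\top}|+\mathcal{O}(u^2)$, where the cross term $|\mathbf{\Gamma}_1||\mathbf{\Gamma}_2|$ is $\mathcal{O}(u^2)$ and is absorbed into the tail. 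Rewriting $2C_m$ as $C_m$ (which stays polynomial in $m$) leaves $|\mathbf{\Gamma}|\leq C_m u|\mathbf{L}||\mathbf{L}^{\top}|+\mathcal{O}(u^2)$. Finally, moving $\mathbf{L}\mathbf{L}^{\top}\mathbf{\Delta}\hat{\mathbf{W}}$ across the relation $(\mathbf{L}\mathbf{L}^{\top}+\mathbf{\Gamma})\mathbf{\Delta}\hat{\mathbf{W}}=\mathbf{B}$ gives $\mathbf{B}-\mathbf{L}\mathbf{L}^{\top}\mathbf{\Delta}\hat{\mathbf{W}}=\mathbf{\Gamma}\mathbf{\Delta}\hat{\mathbf{W}}$, so $|\mathbf{B}-\mathbf{L}\mathbf{L}^{\top}\mathbf{\Delta}\hat{\mathbf{W}}|=|\mathbf{\Gamma}\mathbf{\Delta}\hat{\mathbf{W}}|\leq|\mathbf{\Gamma}||\mathbf{\Delta}\hat{\mathbf{W}}|$, and substituting the bound on $|\mathbf{\Gamma}|$ produces~\ref{eq:eb_cd}.

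The main obstacle is bookkeeping rather than any deep inequality: ensuring the perturbation generated in the forward solve is not double-counted when $\hat{\mathbf{C}}$ reappears as the inexact right-hand side of the backward solve, and legitimately merging the two independent constants named $C_m$ in Lemmas~\ref{lemma:fs} and~\ref{lemma:bs}, together with the factor of two from their sum, into one $m$-polynomial constant. A secondary point worth a sentence is whether the error incurred in forming $\hat{\mathbf{L}}$ itself — through the Givens rank-one update~\ref{eq:gives} or the initial Cholesky factorization — should enter: by the standard componentwise backward-error result for Cholesky it contributes a further term of exactly the shape $\leq c\,u\,|\hat{\mathbf{L}}||\hat{\mathbf{L}}^{\top}|$, which is absorbed into $\mathbf{\Gamma}$ without altering the form of the bound, so the statement is safely written with a single $\mathbf{\Gamma}$.
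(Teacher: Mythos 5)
Your proposal is correct and follows essentially the same route as the paper's own proof: apply Lemma~\ref{lemma:fs} to the forward solve and Lemma~\ref{lemma:bs} to the backward solve with $\hat{\mathbf{C}}$ as the right-hand side, eliminate $\hat{\mathbf{C}}$ to get $(\mathbf{L}\mathbf{L}^{\top}+\mathbf{\Gamma})\mathbf{\Delta}\hat{\mathbf{W}}=\mathbf{B}$, bound $|\mathbf{\Gamma}|$ entrywise, and conclude equation~\ref{eq:eb_cd}. Your bookkeeping is in fact slightly more careful than the paper's (explicitly absorbing the factor of two and the cross term $|\mathbf{\Gamma}_1||\mathbf{\Gamma}_2|$ into $C_m u$ and $\mathcal{O}(u^2)$, and writing $|\mathbf{\Gamma}\mathbf{\Delta}\hat{\mathbf{W}}|\leq|\mathbf{\Gamma}||\mathbf{\Delta}\hat{\mathbf{W}}|$ as an inequality rather than an equality), so no changes are needed.
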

\begin{proof}
From Lemmas \ref{lemma:fs} and \ref{lemma:bs}, we have
\begin{align}
&(\mathbf{L}+\mathbf{\Gamma}_{1})\mathbf{\hat{C}}=\mathbf{B}, \hspace{1.7em} |\mathbf{\Gamma}_1| \leq C_m u|\mathbf{L}|+\mathcal{O}(u^{2}), \notag \\
&(\mathbf{L}^{\top}+\mathbf{\Gamma}_{2})\mathbf{\Delta}\mathbf{\hat{W}}=\mathbf{\hat{C}}, \quad|\mathbf{\Gamma}_2| \leq C_m u|\mathbf{L}^{\top}|+\mathcal{O}(u^{2}), \notag
\end{align}
and thus 
\begin{align}
\mathbf{B} &= (\mathbf{L}+\mathbf{\Gamma}_{1})(\mathbf{L}^{\top}+\mathbf{\Gamma}_{2})\mathbf{\Delta}\mathbf{\hat{W}} \notag \\
&= (\mathbf{L}\mathbf{L}^{\top}+\mathbf{L}\mathbf{\Gamma}_2+\mathbf{\Gamma}_1\mathbf{L}^{\top}+\mathbf{\Gamma}_1\mathbf{\Gamma}_2)\mathbf{\Delta}\mathbf{\hat{W}}. \notag
\end{align}
Thus, we find $(\mathbf{L}\mathbf{L}^{\top}+\mathbf{\Gamma})\mathbf{\Delta}\mathbf{\hat{W}}=\mathbf{B}$ with
\begin{equation}
|\mathbf{\Gamma}| 
\leq |\mathbf{L}||\mathbf{\Gamma}_2|+|\mathbf{\Gamma}_1||\mathbf{L}^{\top}|+\mathcal{O}(u^{2}) \notag \\
\leq C_m u|\mathbf{L}||\mathbf{L}^{\top}|+\mathcal{O}(u^{2}), \notag
\end{equation}
such that
\begin{align}
|\mathbf{B}-\mathbf{L}\mathbf{L}^{\top}\mathbf{\Delta}\mathbf{\hat{W}}|&=|\mathbf{\Gamma}||\mathbf{\Delta}\mathbf{\hat{W}}| \notag \\
&\leq C_m u|\mathbf{L}||\mathbf{L}^{\top}||\mathbf{\Delta}\mathbf{\hat{W}}|+\mathcal{O}(u^2). \notag
\end{align}
Thus, we can conclude equation \ref{eq:eb_cd}.
\end{proof}

\noindent\textbf{Error Bound of Inverse Methods}. As for the previous I-BLS, BLS-CIL, and RI-BLS, none of them avoids computing the inverse of a large matrix, which means to obtain $\mathbf{\Delta}\mathbf{W}=fl\left((\mathbf{A}^{\top}\mathbf{A}+\lambda\mathbf{I})^{-1}\mathbf{B}\right)$ only with rounding errors. Then, we have
\begin{align}
\mathbf{\Delta}\mathbf{\hat{W}}=\big((\mathbf{A}^{\top}\mathbf{A}+&\lambda\mathbf{I})^{-1}+\mathbf{\Gamma}\big)\mathbf{B}, \notag \\
&|\mathbf{\Gamma}| \leq C_m u|(\mathbf{A}^{\top}\mathbf{A}+\lambda\mathbf{I})^{-1}|+\mathcal{O}(u^2) \notag
\end{align} and
$(\mathbf{A}^{\top}\mathbf{A}+\lambda\mathbf{I})\mathbf{\Delta}\mathbf{\hat{W}}=\mathbf{B}+(\mathbf{A}^{\top}\mathbf{A}+\lambda\mathbf{I})\mathbf{\Gamma}\mathbf{B}$.
Thus, its error bound is
\begin{align}
|\mathbf{B}-&(\mathbf{A}^{\top}\mathbf{A}+\lambda\mathbf{I})\mathbf{\Delta}\mathbf{\hat{W}}| \notag \\
&\leq C_m u |(\mathbf{A}^{\top}\mathbf{A}+\lambda\mathbf{I})||(\mathbf{A}^{\top}\mathbf{A}+\lambda\mathbf{I})^{-1}||\mathbf{B}|+\mathcal{O}(u^2) \notag \\
\label{eq:eb_im}
&\leq C_m u |\mathbf{L}||\mathbf{L}^{\top}||(\mathbf{A}^{\top}\mathbf{A}+\lambda\mathbf{I})^{-1}||\mathbf{B}|+\mathcal{O}(u^2)
\end{align}

Observing equations \ref{eq:eb_cd} and \ref{eq:eb_im}, since $\mathbf{A}$ is generally an ill-conditioned matrix for BLS, there exists $|\mathbf{\Delta}\mathbf{\hat{W}}|\ll|(\mathbf{A}^{\top}\mathbf{A}+\lambda\mathbf{I})^{-1}||\mathbf{B}|$. Thus, matrix inverse methods provide higher error bounds than ours.

\subsection{Time Complexity}
\label{section:tc}
The time complexities of our Online-BLS and three existing incremental BLS are shown in Table \ref{tab:tc}.
\begin{table}
\centering
\begin{tabular}{ll}
\toprule
Method & Time Complexity \\
\midrule
I-BLS & $\mathcal{O}(km+m^2+m^3+mc)$/$\mathcal{O}(km+m+mc)$ \\
BLS-CIL & $\mathcal{O}(m^3+m^2c+m^2+mc+m)$\\
RI-BLS & $\mathcal{O}(m^3+m^2c+m^2+mc)$ \\
\midrule
Ours & $\mathcal{O}(m^2c+m^2+mc)$ \\
\bottomrule
\end{tabular}
\caption{Time complexity}
\label{tab:tc}
\end{table}
First, the time complexity of I-BLS contains the primary term of $k$. That is, as the number of samples received by I-BLS increases, its online update time overhead increases linearly. This is quite terrible for online learning tasks, which tend to have a lot or even an infinite number of samples. By comparing the time complexity of ours and the remaining two methods, we find that RI-BLS and BLS-CIL have one (i.e., $m^3$) and two (i.e., $m^3$ and $m$) additional terms over Online-BLS, respectively. Thus, our Online-BLS algorithm is remarkably efficient and well suitable for online machine learning tasks.

\section{Experiments}
The most common prequential test-then-train experimental paradigm is adopted. First, Online-BLS is compared with state-of-the-art incremental BLS algorithms on the six real-world stationary datasets. Then, parameter sensitivity analysis experiments are performed to verify the stability of Online-BLS. Third, ablation experiments are performed by removing EOUS from our method. Finally, additional experiments are performed on the four non-stationary datasets to demonstrate the superiority of our approach over state-of-the-art baselines. Table \ref{tab:dataset} and Appendix B summarize and detail the dataset used. The metrics used in this paper are detailed in Appendix C. All experiments are conducted on the Ubuntu 20.04 operating system, and the CPU is AMD EPYC 7302. 
\begin{table}
\centering
\begin{tabular}{lrrrr}
\toprule
Dataset & IA & C & DP & Type \\
\midrule
IS & 19 & 7 & 2,310 & Stationary \\
USPS & 256 & 10 & 9,298 & Stationary \\
Letter & 16 & 26 & 20,000 & Stationary \\
Adult & 14 & 2 & 45,222 & Stationary \\
Shuttle & 8 & 7 & 58,000 & Stationary \\
MNIST & 784 & 10 & 70,000 & Stationary \\
Hyperplane & 20 & 2 & 100,000 & Concept Drift \\
SEA & 3 & 2 & 100,000 & Concept Drift \\
Electricity & 6 & 2 & 45,312 & Concept Drift \\
CoverType & 54 & 7 & 581,012 & Concept Drift \\
\bottomrule
\end{tabular}
\caption{Descriptions of datasets. IS: Image Segment; IA: Input Attributes; C: Classes; DP: Data Points.}
\label{tab:dataset}
\end{table}

\subsection{Experiments on Stationary Datasets}
\label{section:cweobls}
The comparison methods include I-BLS~\cite{7987745}, BLS-CIL~\cite{10086560}, and RI-BLS~\cite{10533441}. 
To ensure fairness, we set the parameters common to all comparison methods and Online-BLS to be the same. Specifically, $n_1$, $n_2$, $n_3$, and $n_4$ were set to $10$, $10$, $1000$, and $1$, respectively. The regularization parameter $\lambda$ was set to $1e-8$ for I-BLS, RI-BLS and Online-BLS. For BLS-CIL, $\lambda_1$ and $\lambda_2$ were set to $1$ in the original paper, which is unstable in the online learning task. Therefore, we tuned both to 0.1 to obtain better performance.
To draw credible conclusions, we repeated each experiment 10 times by changing the order of streaming data and the initialization parameters.

The average and Standard Deviation (SD) of the final Online Cumulative Accuracies (OCA) for all methods are shown in Table \ref{tab:oca}. 
\begin{table*}
\centering
\begin{tabular}{lcccccc}
\toprule
Method & IS & USPS & Letter & Adult & Shuttle & MNIST \\
\midrule
I-BLS & 73.6$\pm$19.80 & 61.1$\pm$40.10 & 59.4$\pm$27.30 & 71.3$\pm$13.60 & 86.6$\pm$20.10 & 83.5$\pm$24.40 \\
BLS-CIL & 83.6$\pm$0.423 & \textbf{93.7}\bm{$\pm$}\textbf{0.239} & 79.7$\pm$0.711 & \underline{75.4$\pm$0.020} & \underline{95.9$\pm$0.299} & \underline{89.9$\pm$0.180} \\
RI-BLS  & 89.9$\pm$0.668 & 92.8$\pm$0.228 & \underline{87.6$\pm$0.541} & 72.0$\pm$0.856 & 90.2$\pm$1.100 & 89.9$\pm$0.246 \\
\midrule
Ours w/o EOUS & \textbf{90.8}\bm{$\pm$}\textbf{0.229} & \underline{93.6$\pm$0.386} & \textbf{88.9}\bm{$\pm$}\textbf{0.137} & \textbf{76.4}\bm{$\pm$}\textbf{0.050} & \textbf{98.2}\bm{$\pm$}\textbf{0.026} & \textbf{92.5}$\bm{\pm}$\textbf{0.105} \\
Ours & \textbf{90.8}$\bm{\pm}$\textbf{0.229} & \underline{93.6$\pm$0.386} & \textbf{88.9}$\bm{\pm}$\textbf{0.137} & \textbf{76.4}$\bm{\pm}$\textbf{0.051} & \textbf{98.2}$\bm{\pm}$\textbf{0.026} & \textbf{92.5}$\bm{\pm}$\textbf{0.105} \\
\bottomrule
\end{tabular}
\caption{Average and SD of OCA (\%). Note: Bold indicates the best result, and underlining denotes the second-best result.}
\label{tab:oca}
\end{table*}
First, our proposed Online-BLS exceeds all the baselines on 5 out of 6 datasets. For the USPS dataset, the OCA of Online-BLS is also very close to the highest one. Also, our Online-BLS has low standard deviations, which means with excellent stability for different experimental trials.
The average and SD of the online update time are shown in Figure \ref{fig:tt}.
\begin{figure}[tb]
\centering
\subfloat[IS]{
\includegraphics[width=1.08in, height=0.78in]{./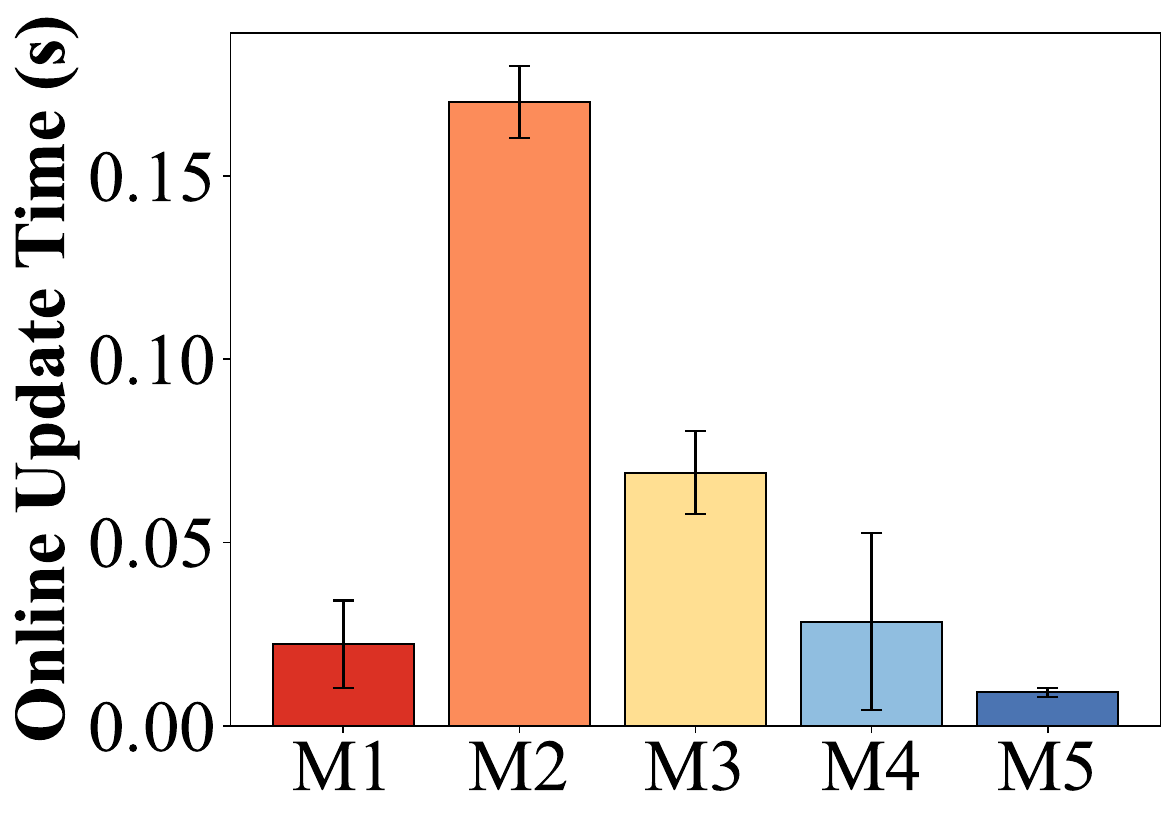}
\label{tt:sf1}
}
\subfloat[USPS]{
\includegraphics[width=1.08in, height=0.78in]{./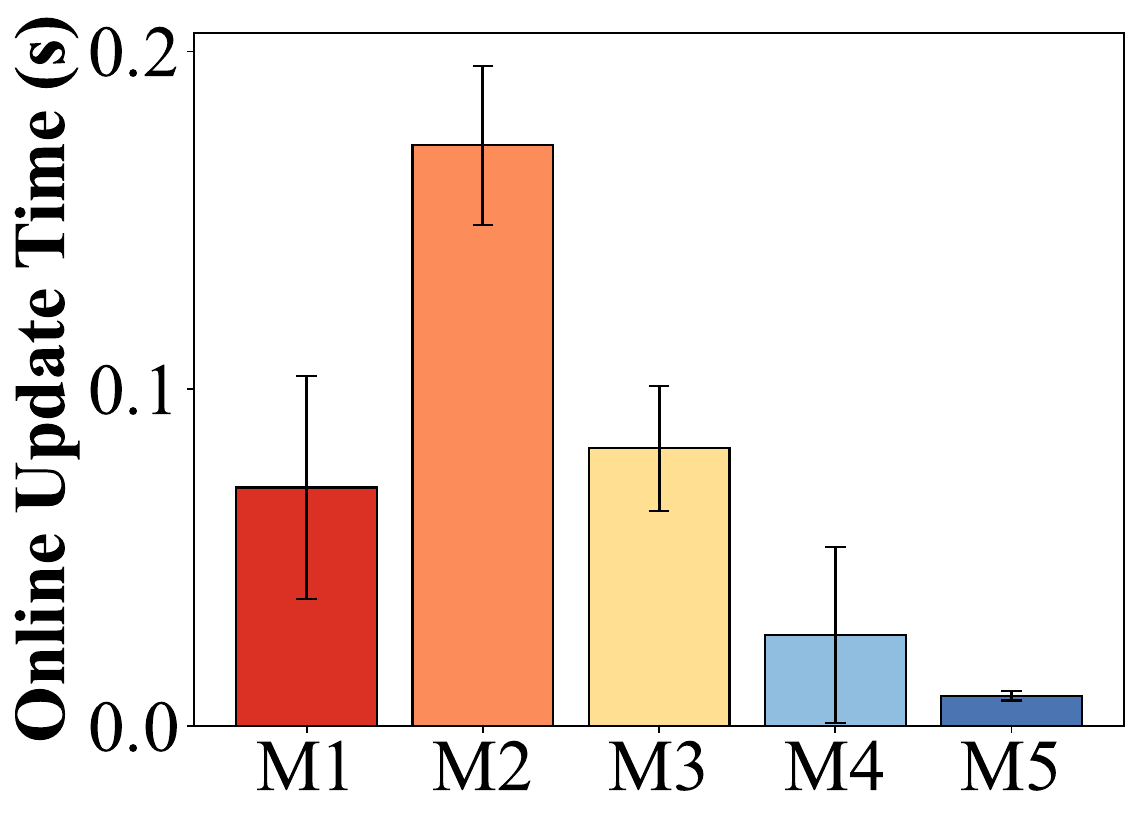}
\label{tt:sf2}
}
\subfloat[Letter]{
\includegraphics[width=1.08in, height=0.78in]{./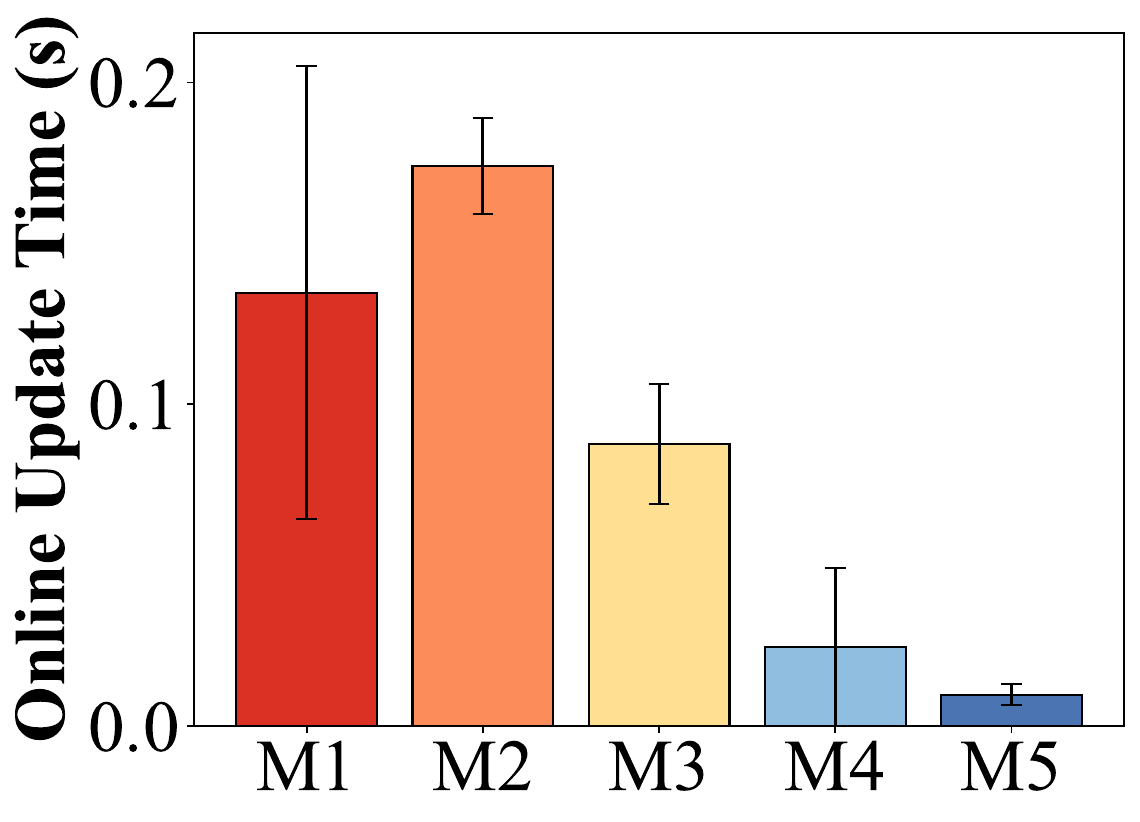}
\label{tt:sf3}
}
\newline
\subfloat[Adult]{
\includegraphics[width=1.08in, height=0.78in]{./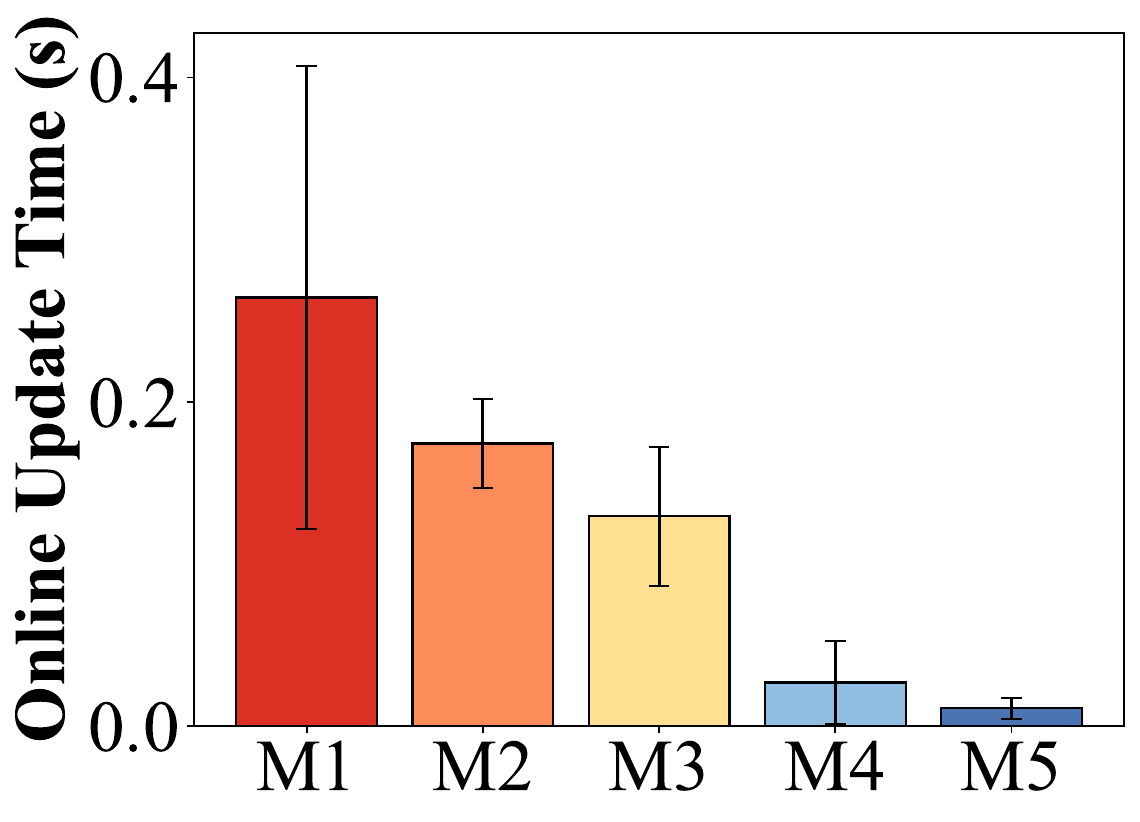}
\label{tt:sf4}
}
\subfloat[Shuttle]{
\includegraphics[width=1.08in, height=0.78in]{./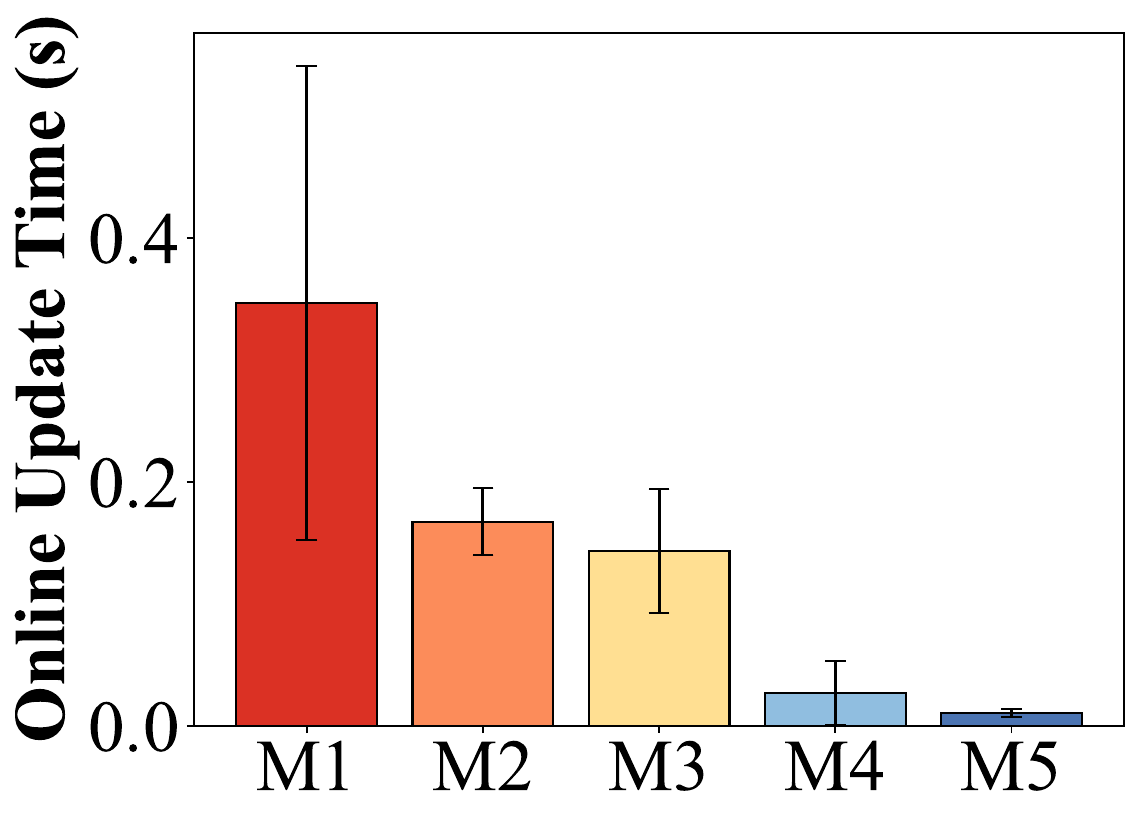}
\label{tt:sf5}
}
\subfloat[MNIST]{
\includegraphics[width=1.08in, height=0.78in]{./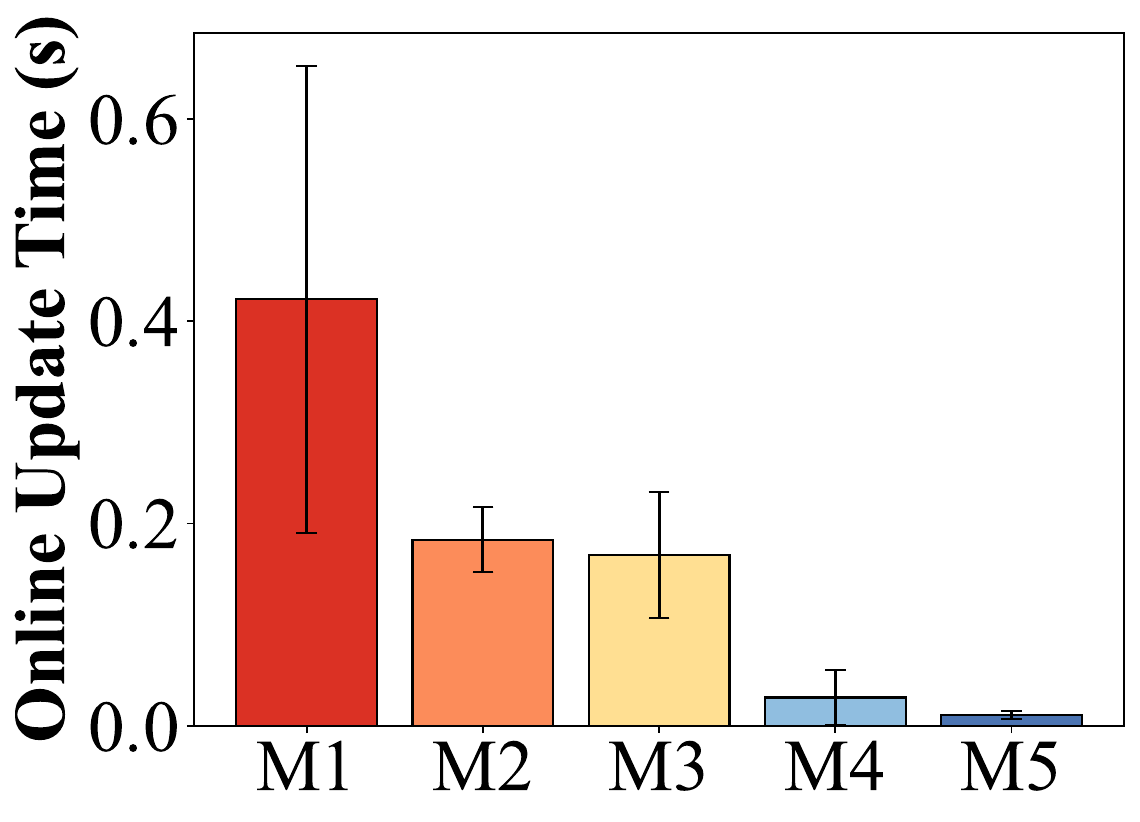}
\label{tt:sf6}
}
\caption{Average and Standard Deviation (SD) of online update time. M1: I-BLS; M2: BLS-CIL; M3: RI-BLS; M4: Ours w/o EOUS; M5: Ours.}
\label{fig:tt}
\end{figure}
First, our Online-BLS has the shortest online update time, indicating that our method is very efficient. Second, we find that the standard deviation of the online update time of I-BLS increases with the number of instances. These results are also consistent with the time complexity analysis in Section \ref{section:tc}.
Figure \ref{fig:oca} shows the convergence curves of all the methods.
\begin{figure}[tb]
\centering
\subfloat[IS]{
\includegraphics[width=1.08in, height=0.78in]{./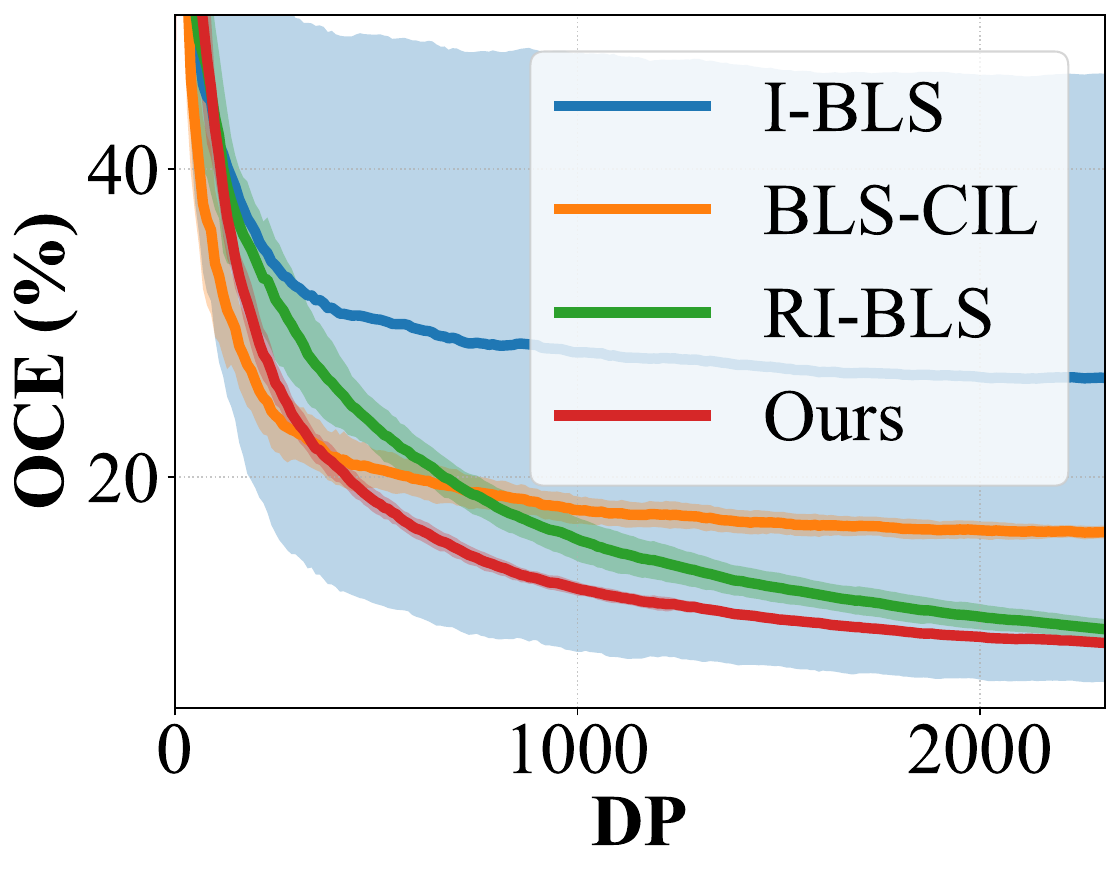}
\label{sf1}
}
\subfloat[USPS]{
\includegraphics[width=1.08in, height=0.8in]{./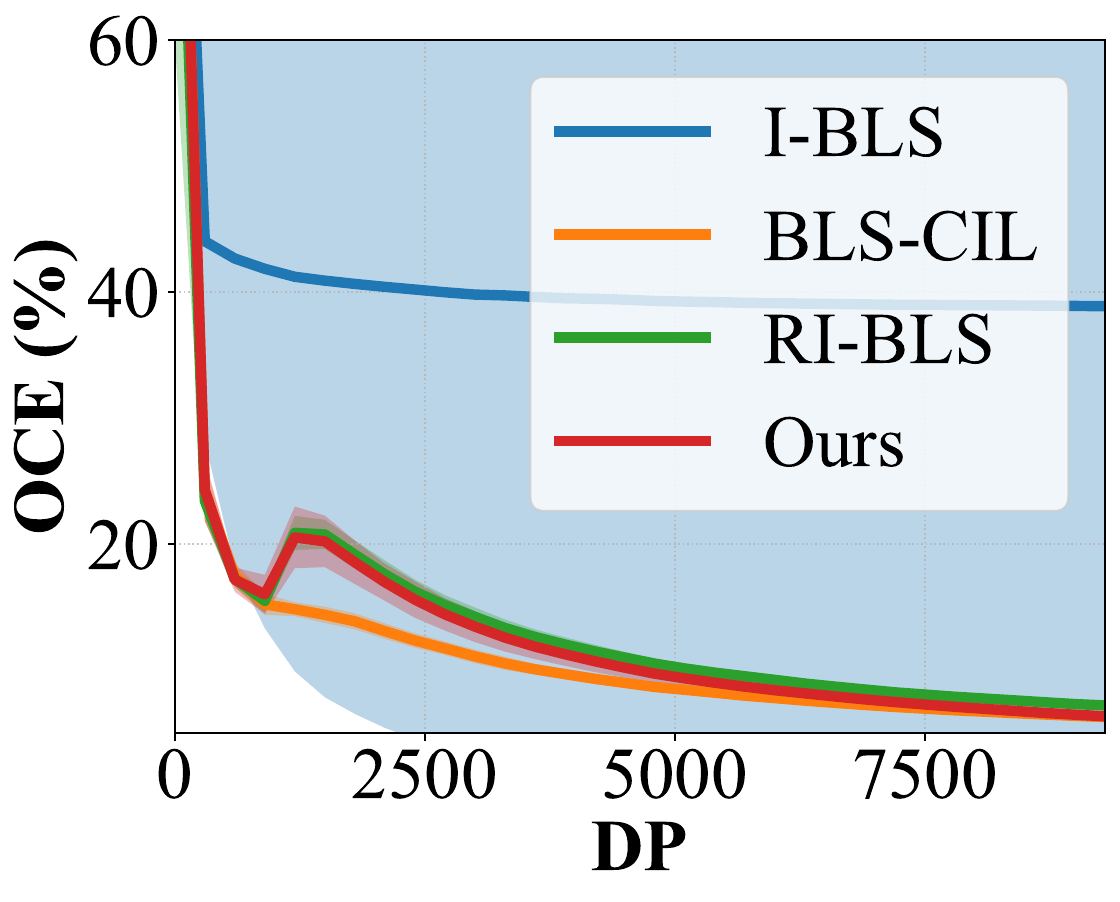}
\label{sf2}
}
\subfloat[Letter]{
\includegraphics[width=1.17in, height=0.8in]{./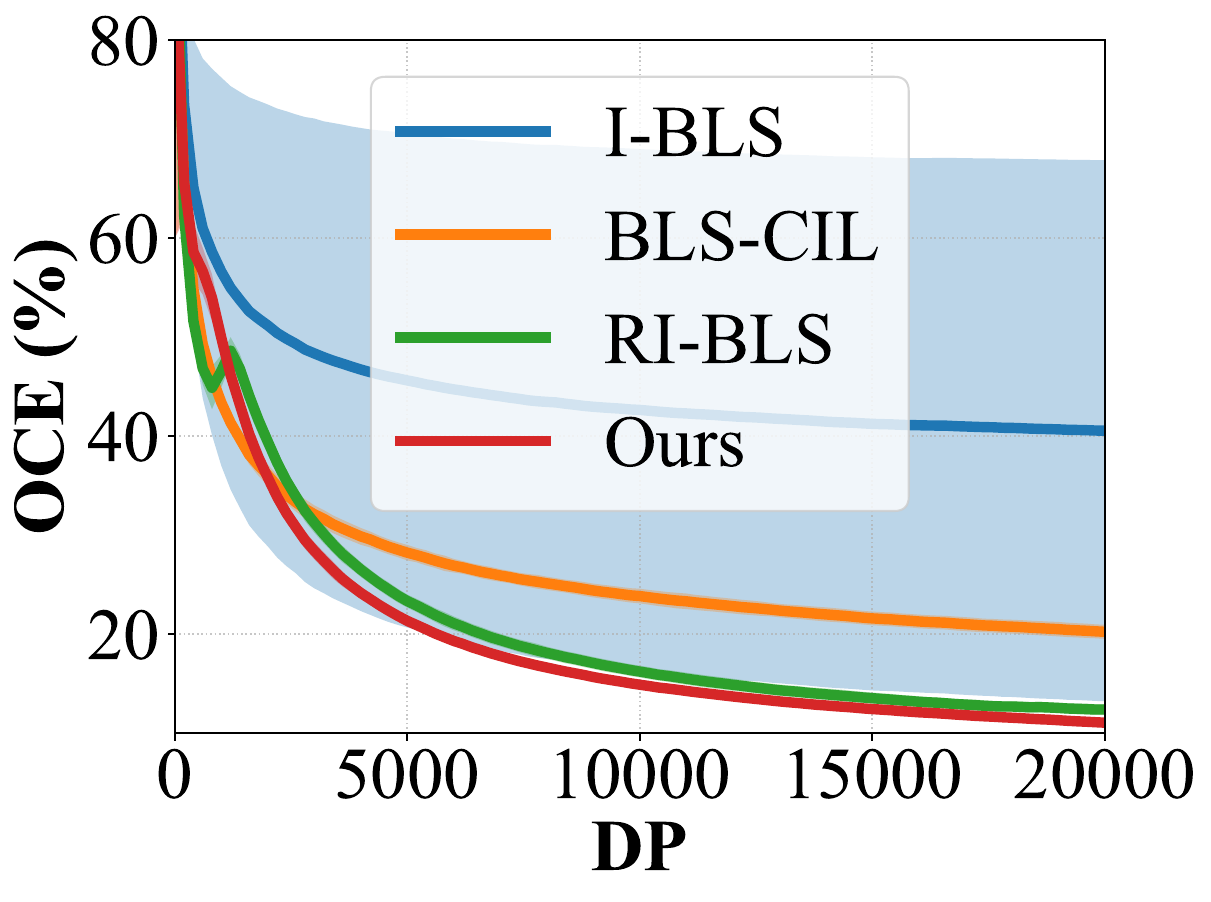}
\label{sf3}
}
\newline
\subfloat[Adult]{
\includegraphics[width=1.08in, height=0.8in]{./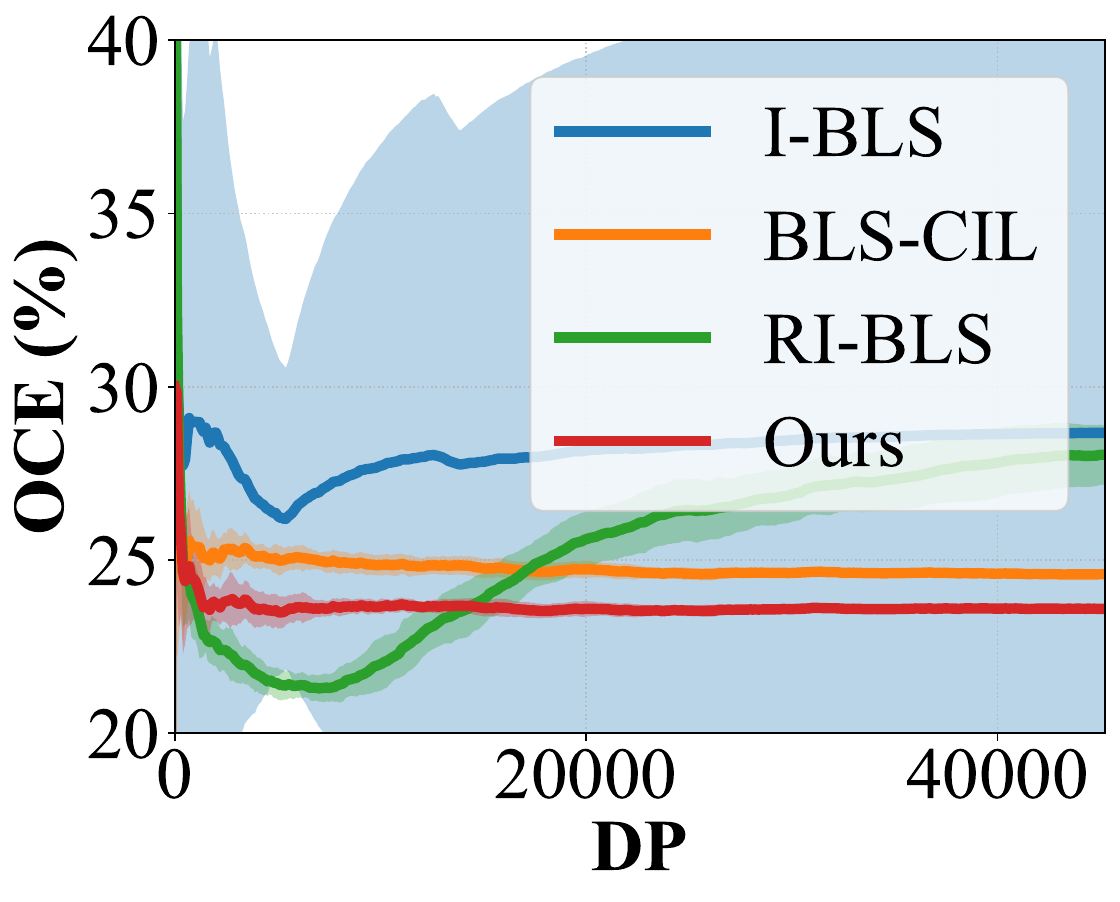}
\label{sf4}
}
\subfloat[Shuttle]{
\includegraphics[width=1.08in, height=0.8in]{./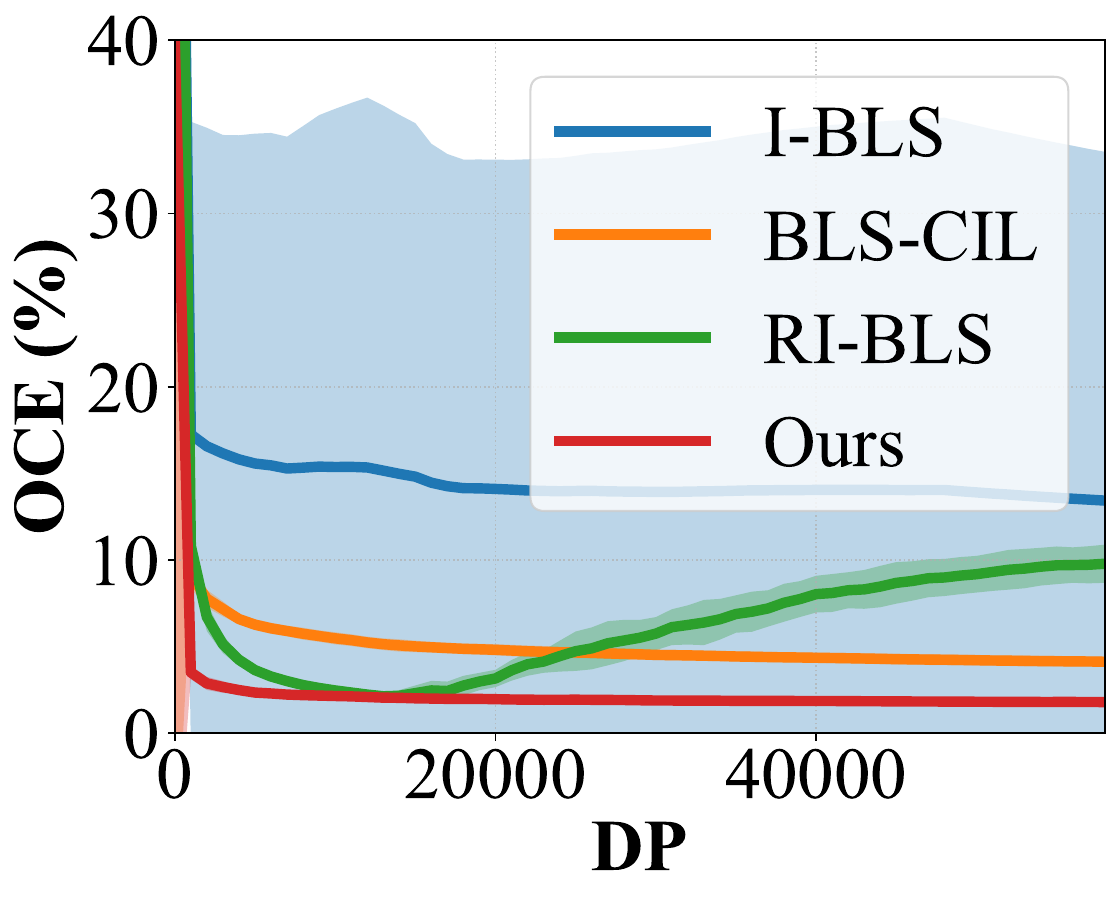}
\label{sf5}
}
\subfloat[MNIST]{
\includegraphics[width=1.08in, height=0.78in]{./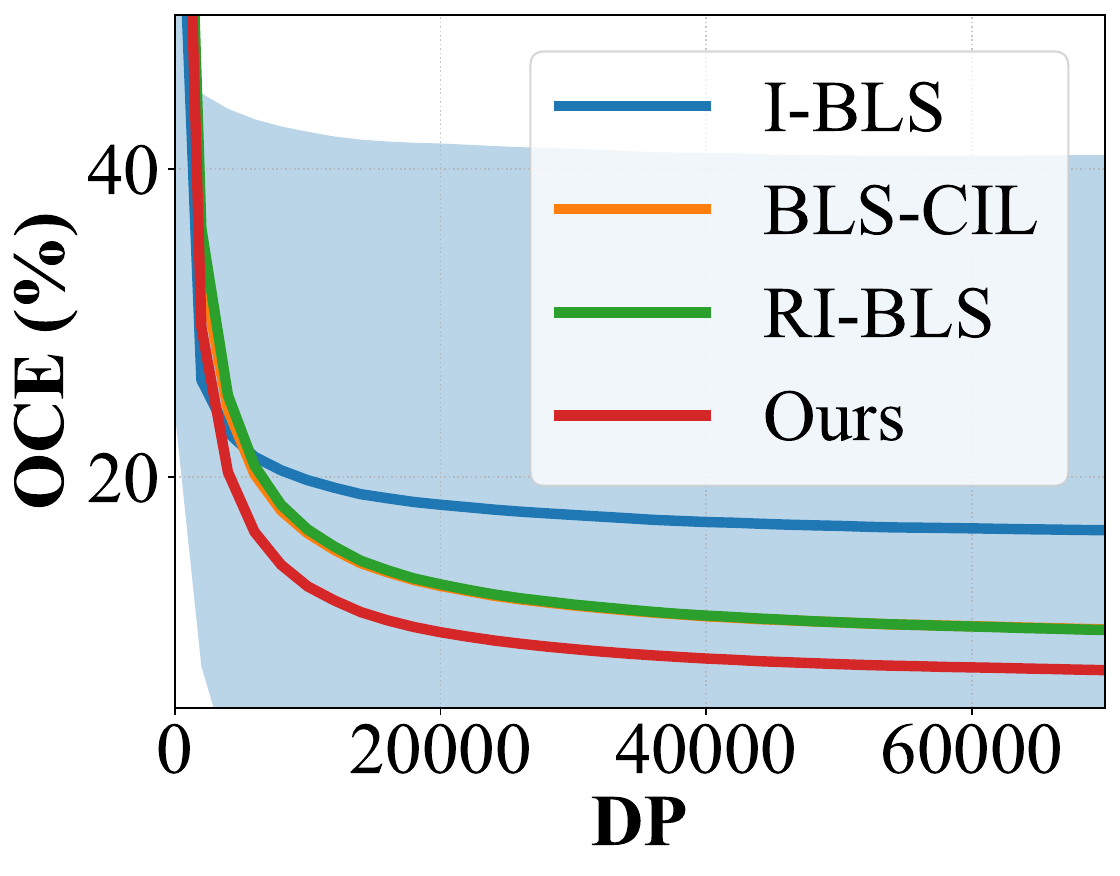}
\label{sf6}
}
\caption{Convergence curves of average and SD. Ours w/o EOUS is omitted because its results are consistent with those of Ours.}
\label{fig:oca}
\end{figure}
The sum of the online cumulative error (OCE) and OCA is one.
Online-BLS consistently converges rapidly and outperforms all the baselines at almost all time steps on all datasets. It also reaffirms the effectiveness of our method.

\subsection{Parameter Sensitivity Analysis}
Our proposed Online-BLS comes with several tuning parameters, including the enhancement nodes $n_3$. The impact of different $n_3$ on the average OCE is shown in Figure \ref{fig:n3}. First, our method consistently achieves the lowest average OCE compared to all baselines on all $n_3$ cases across the six datasets. Second, we find that the average OCE of I-BLS changes significantly with different $n_3$. This is because I-BLS is very unstable. That is, the accuracy of I-BLS in some trials is close to that of random guessing cases. Based on the above results, we can safely conclude that Online-BLS is robust to changes of $n_3$. Hence, we simply set $n_3$ to $1,000$ on all datasets and did not adjust it for different datasets.
\begin{figure}[tb]
\centering
\subfloat[IS]{
\includegraphics[width=1.08in, height=0.8in]{./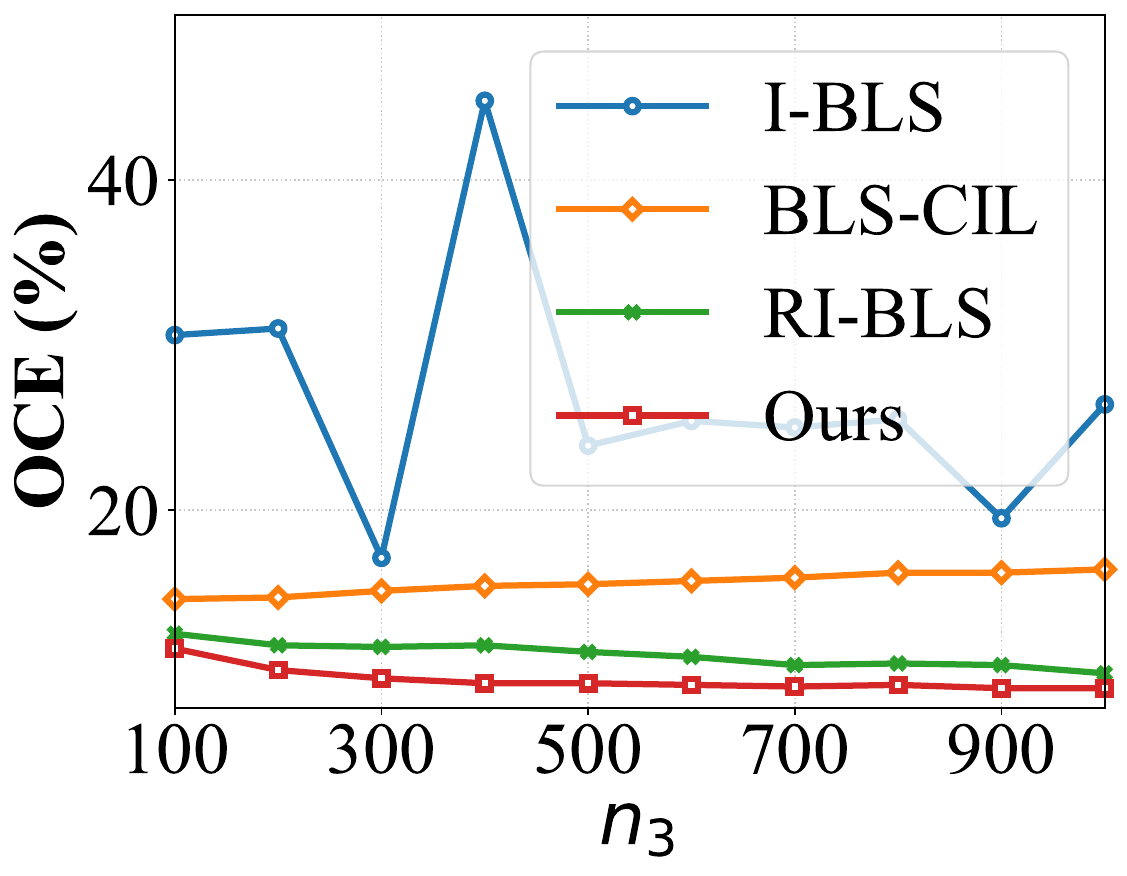}
\label{n3:sf1}
}
\subfloat[USPS]{
\includegraphics[width=1.08in, height=0.8in]{./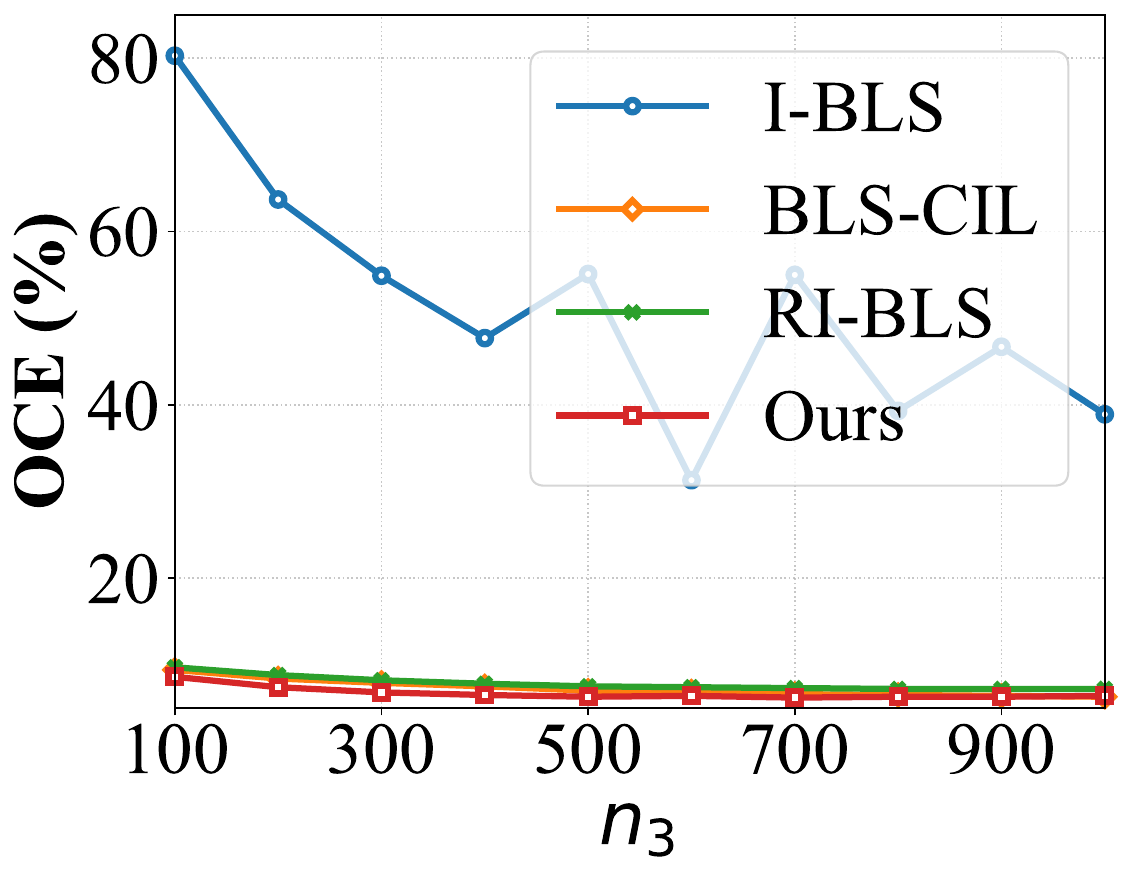}
\label{n3:sf2}
}
\subfloat[Letter]{
\includegraphics[width=1.08in, height=0.8in]{./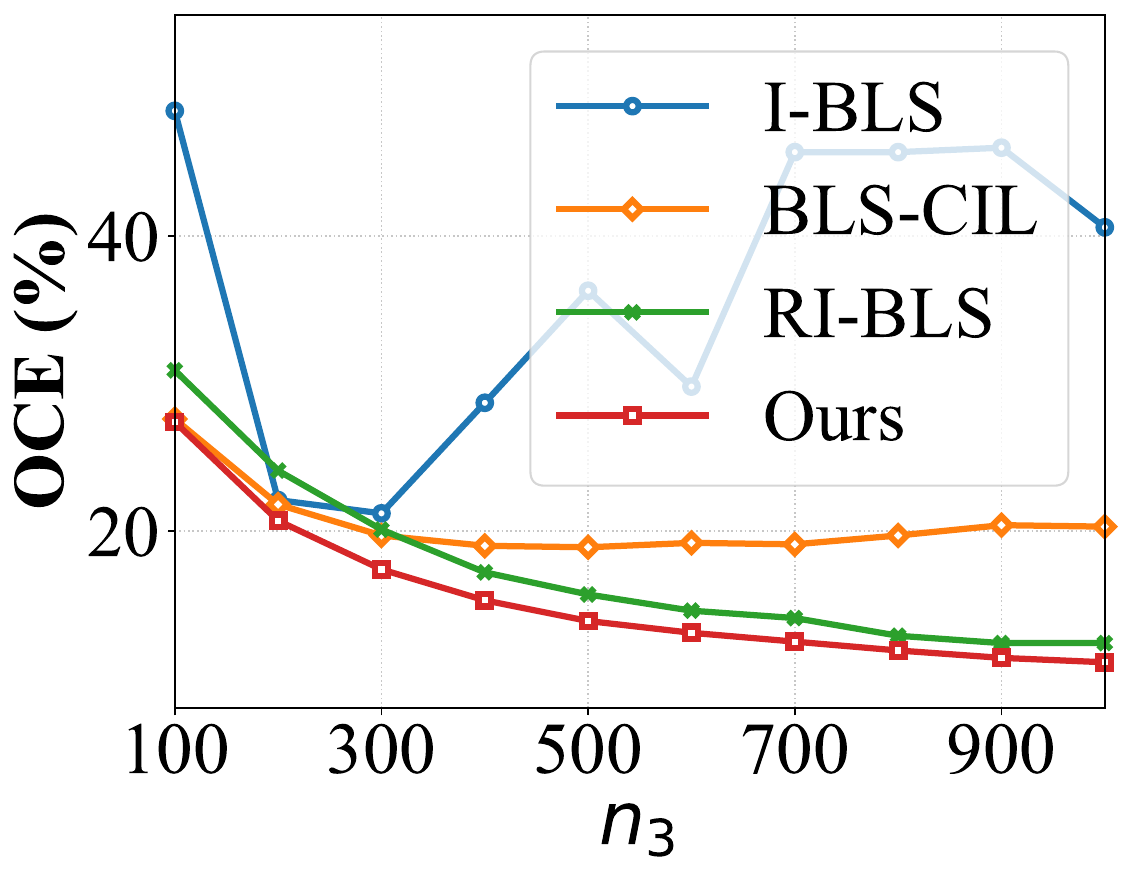}
\label{n3:sf3}
}
\newline
\subfloat[Adult]{
\includegraphics[width=1.08in, height=0.8in]{./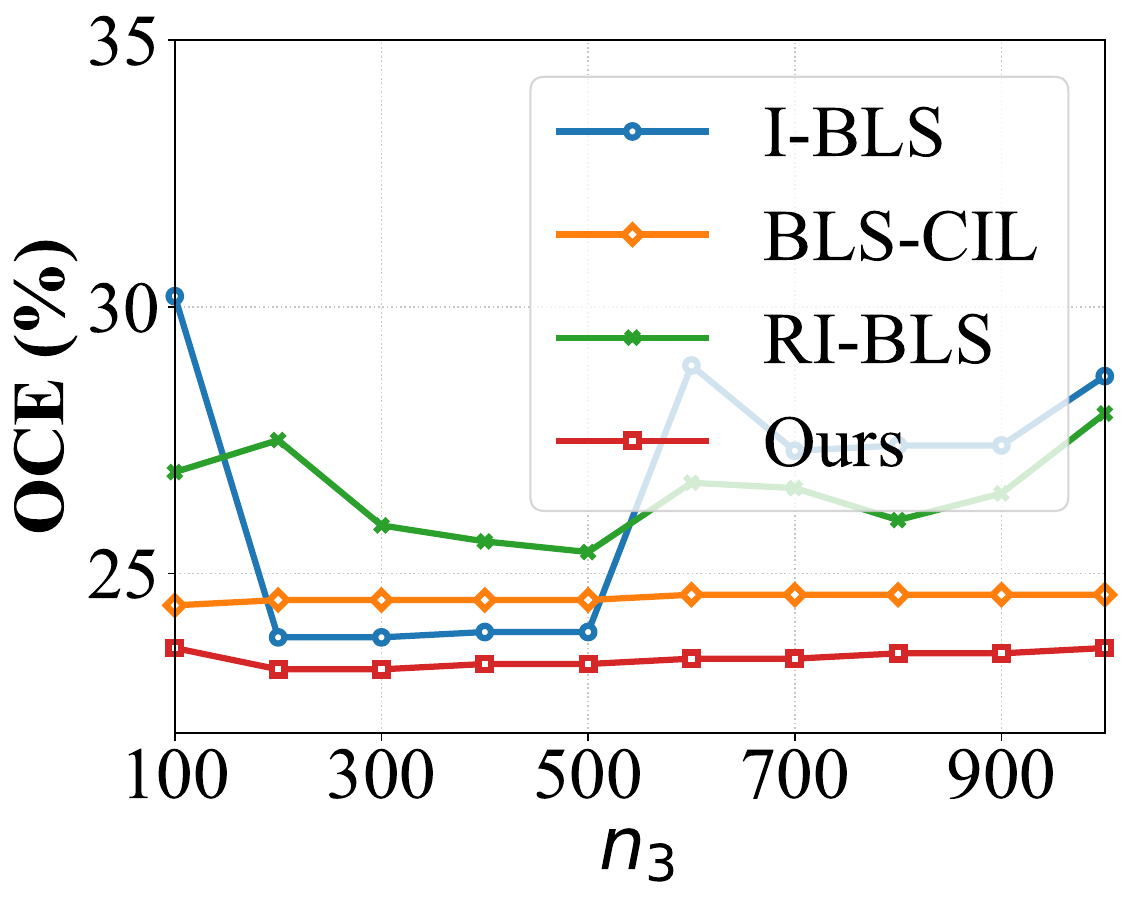}
\label{n3:sf4}
}
\subfloat[Shuttle]{
\includegraphics[width=1.08in, height=0.8in]{./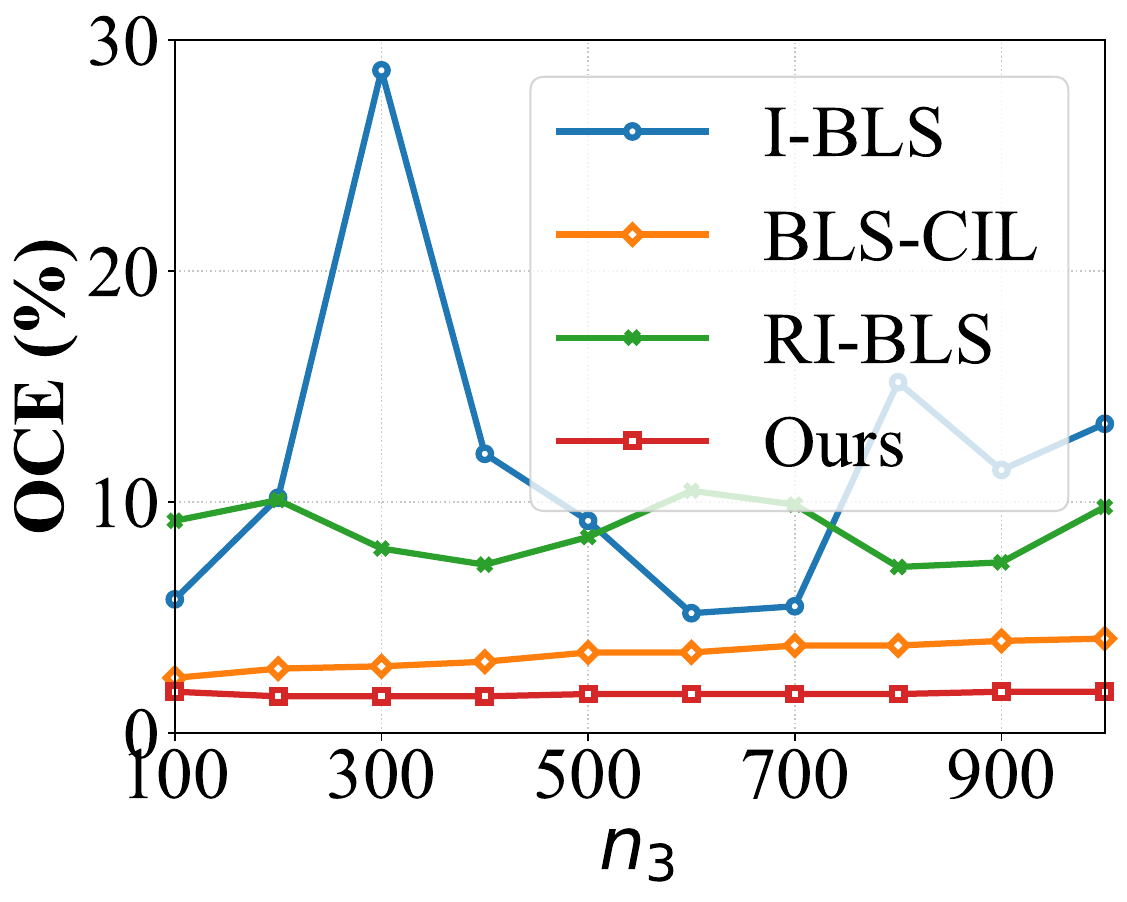}
\label{n3:sf5}
}
\subfloat[MNIST]{
\includegraphics[width=1.08in, height=0.8in]{./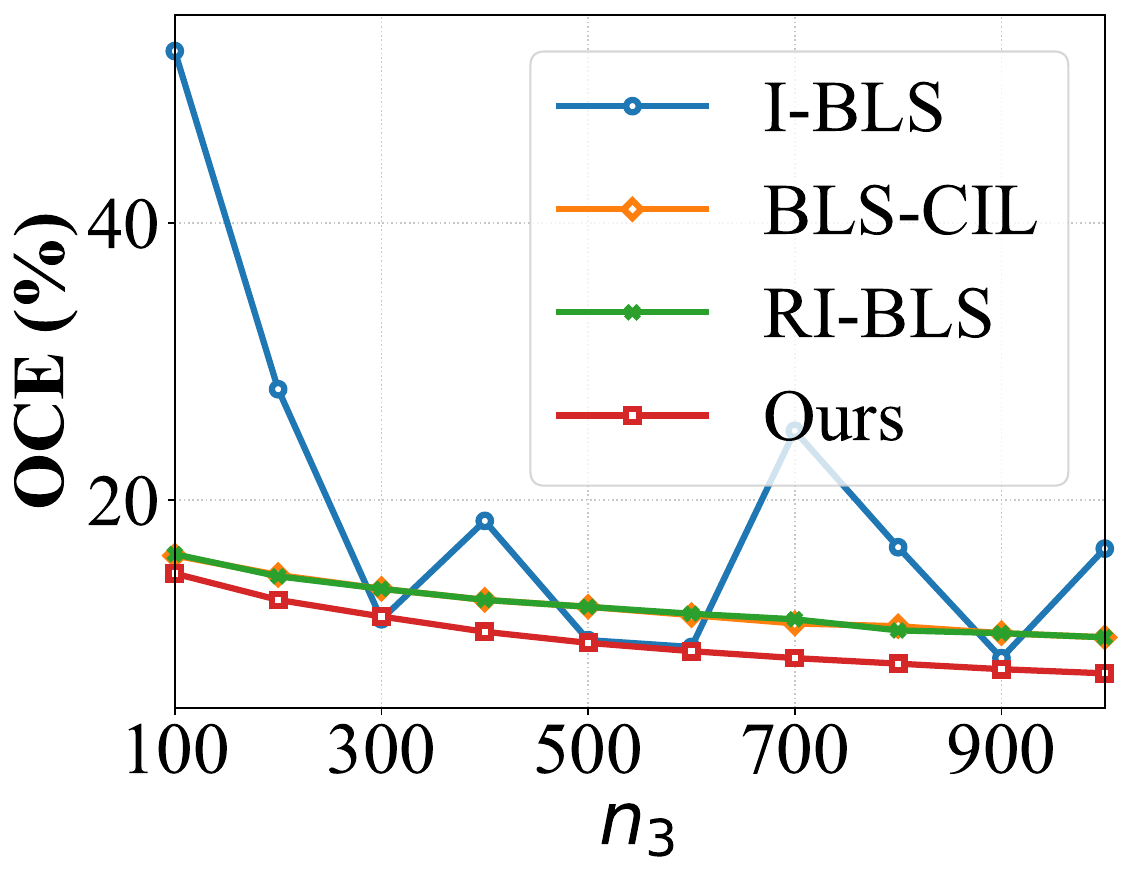}
\label{n3:sf6}
}
\caption{Average OCE under different $n_3$ values. Ours w/o EOUS is omitted because its results are consistent with those of Ours.}
\label{fig:n3}
\end{figure}

\subsection{Ablation Study}
Our framework has two main parts, EWEA and EOUS. Amongst them, EWEA is an inseparable part of Online-BLS. Thus, we examine the validity of EOUS in this section. From Table \ref{tab:oca} and Figure \ref{fig:tt}, we can find that EOUS can significantly reduce online update time while maintaining model accuracy. Hence, we can conclude that EOUS is an efficient algorithm that can significantly reduce the online update overhead of Online-BLS.

\subsection{Experiments with Concept Drift Dataset}
In this section, the effectiveness and flexibility of our Online-BLS are validated in the concept drift scenario. To learn non-stationary data streams, we adopt a forgetting factor to force our Online-BLS to assign larger weights to new arrivals. Specifically, we simply set it to $0.99$ for all datasets.
We compare our method with two classes of baselines used for data stream classification with concept drift. The first category is online machine learning algorithms, including HT~\cite{10.1145/502512.502529} and ARF~\cite{gomes2017adaptive}. The second one is SOTA online deep learning models such as HBP~\cite{sahoo2018online}, ADL~\cite{ashfahani2019autonomous}, CAND~\cite{gunasekara2022online}, ATNN~\cite{wen2024adaptive}, and EODL~\cite{su2024elastic}.
Beyond OCA, four metrics are used. They are Balanced Accuracy (BACC), Average Balanced Accuracy (AVRBACC), F1 score (F1), and Matthews Correlation Coefficient (MCC). It is worth noting that all the metrics used in this paper, except OCE, are such that larger values represent better model performance. The details of these metrics can also be found in Appendix C. The parameter settings of our method are consistent with those in section \ref{section:cweobls}. Tables \ref{tab:hp}, \ref{tab:sea}, \ref{tab:elec}, and \ref{tab:coty} demonstrate the final results on the Hyperplane, SEA, Electricity, and CoverType datasets, respectively.
\begin{table}
\centering
\begin{tabular}{lccc}
\toprule
Method & OCA ($\uparrow$) & BACC ($\uparrow$) & AVRBACC ($\uparrow$) \\
\midrule
HT & 85.1 & 86.7 & 85.4 \\
ARF & 76.8 & 76.2 & 76.2 \\
HBP & 87.0 & \underline{91.0} & 86.6 \\
ADL & 77.3 & 80.7 & 77.2 \\
CAND & 88.1 & 89.8 & 87.9 \\
EODL & \underline{89.5} & 90.5 & \underline{89.2} \\
\midrule
\textbf{Ours (Ada.)}& \textbf{92.6} & \textbf{92.6} & \textbf{91.7} \\
\bottomrule
\end{tabular}
\caption{Experimental results ($\%$) on Hyperplane dataset. Note: Bold indicates the best result, and underlining denotes the second-best result.}
\label{tab:hp}
\end{table}
\begin{table}
\centering
\begin{tabular}{lccc}
\toprule
Method & OCA ($\uparrow$) & BACC ($\uparrow$) & AVRBACC ($\uparrow$) \\
\midrule
HT & 81.1 & 82.0 & 79.4 \\
ARF & \underline{83.8} & \textbf{86.5} & 81.5 \\
HBP & 82.1 & 82.5 & 78.7 \\
ADL & 60.1 & 49.8 & 49.9 \\
CAND & 79.7 & 82.8 & 76.7 \\
EODL & 82.5 & 83.2 & 79.4 \\
\midrule
\textbf{Ours (Ada.)}& \textbf{85.2} & \underline{83.3} & \textbf{82.6} \\
\bottomrule
\end{tabular}
\caption{Experimental results ($\%$) on SEA dataset. Note: Bold indicates the best result, and underlining denotes the second-best result.}
\label{tab:sea}
\end{table}
\begin{table}
\centering
\begin{tabular}{lr@{/}lr@{/}lc}
\toprule
Method & OCA & MCC & BACC & F1 & AVRBACC \\
\midrule
HT & 77.0&- & 74.3&- & 74.7 \\
ARF & 72.8&- & 76.1&- & 69.3 \\
HBP & 74.8&- & 79.6&- & 70.3 \\
ADL & 74.9&- & 81.6&- & 70.7 \\
CAND & 78.5&- & \underline{83.7}&- & \underline{76.4} \\
ATNN & \underline{83.6}&\underline{66.0} & -&\underline{83.0} & - \\
EODL & 78.0&- & 83.5&- & 76.2 \\
\midrule
\textbf{Ours (Ada.)}& \textbf{86.8}&\textbf{72.9} & \textbf{86.2}&\textbf{86.4} & \textbf{86.9} \\
\bottomrule
\end{tabular}
\caption{Experimental results ($\%$) on Electricity dataset. Note: Bold indicates the best result, and underlining denotes the second-best result.}
\label{tab:elec}
\end{table}
\begin{table}
\centering
\begin{tabular}{lr@{/}lr@{/}lc}
\toprule
Method & OCA & MCC & BACC & F1 & AVRBACC \\
\midrule
HT & 80.1 & - & 73.0 & - & 70.4 \\
ARF & 83.7 & - & 80.9 & - & 64.2 \\
HBP & 91.2 & - & 82.5 & - & 74.7 \\
ADL & 90.5 & - & 86.7 & - & 76.8 \\
CAND & 92.8 & - & 85.4 & - & 79.2 \\
ATNN & 92.7 & \underline{89.0} & - & \underline{87.0} & - \\
EODL & \underline{93.5} & - & \textbf{90.0} & - & \underline{81.0} \\
\midrule
\textbf{Ours (Ada.)}& \textbf{94.3} & \textbf{90.9} & \underline{89.7} & \textbf{90.0} & \textbf{88.2} \\
\bottomrule
\end{tabular}
\caption{Experimental results (\%) on CoverType dataset. Note: Bold indicates the best result, and underlining denotes the second-best result.}
\label{tab:coty}
\end{table}

To make the experimental conclusions more reliable, the results of all baseline methods are directly cited from the corresponding papers and that of Online-BLS with forgetting factor (denoted with Ada.) are averaged over 10 independent experiments. Firstly, ours (Ada.) surpasses typical online machine learning algorithms with a large gap, which benefits from the random feature mapping of BLS that can extract effective broad features. Additionally, ours (Ada.) outperforms existing online deep learning methods in almost all metrics.  
We conjecture that this is due to the fact that the derivation of our method is based on a closed-form solution, which provides a good guarantee of the optimality of the solution updated online. Figure \ref{fig:cd} shows the convergence curves of Online-BLS with and without forgetting factors. 
\begin{figure}[tb]
\centering
\subfloat[Hyperplane]{
\includegraphics[width=1.5in]{./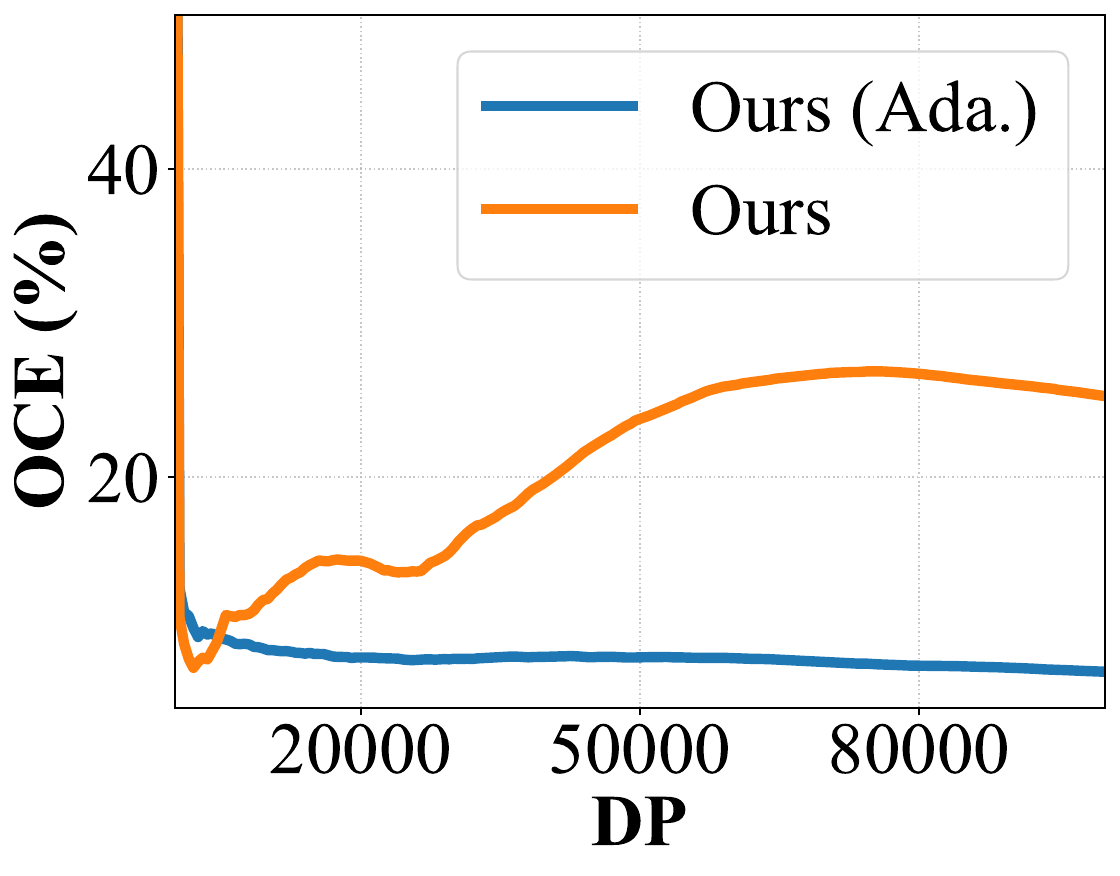}
\label{cd:sf1}
}
\subfloat[SEA]{
\includegraphics[width=1.5in]{./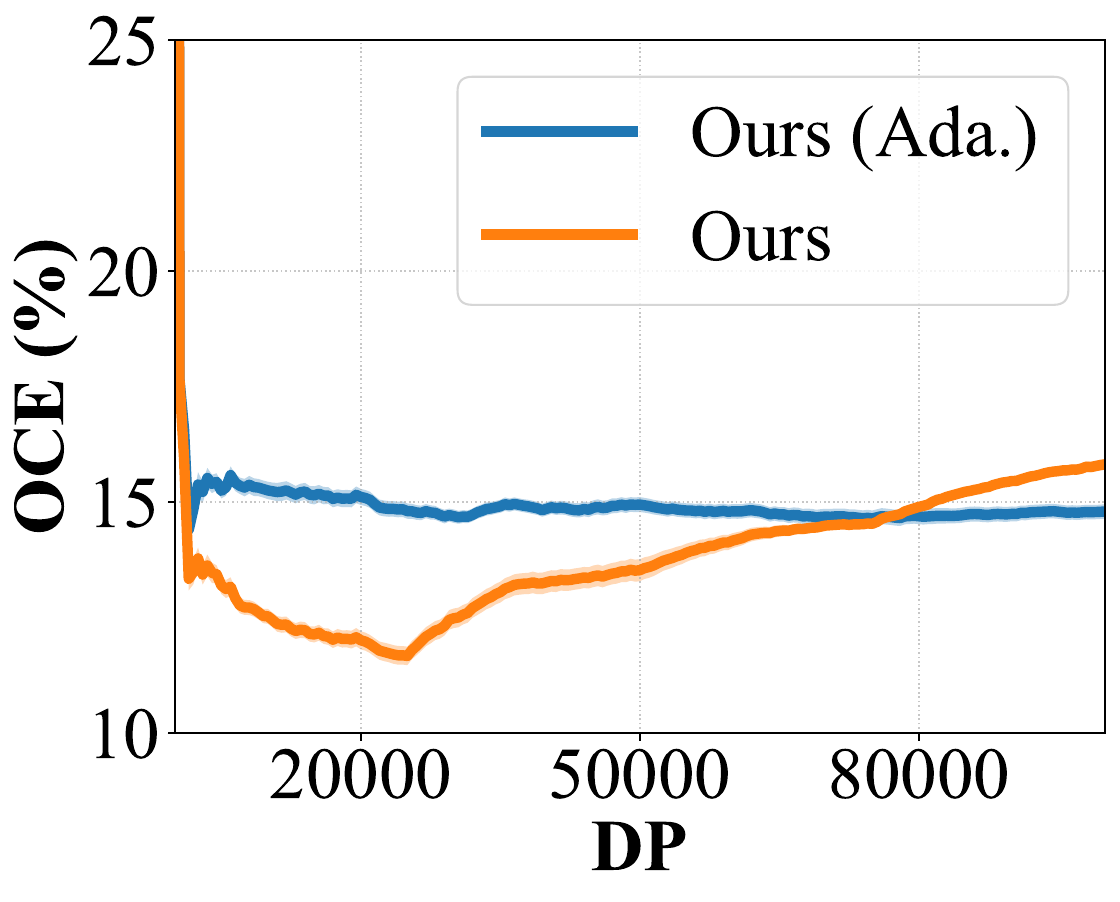}
\label{cd:sf2}
}
\newline
\subfloat[Electricity]{
\includegraphics[width=1.5in]{./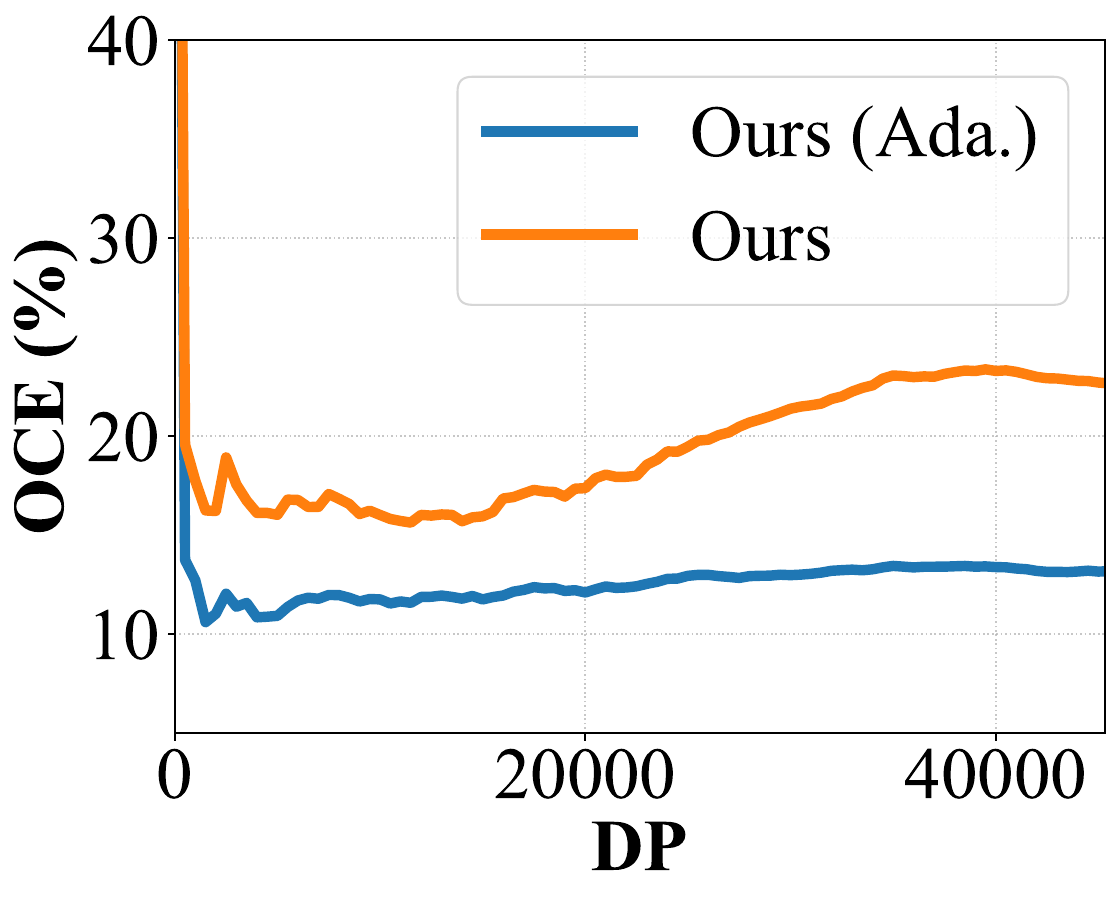}
\label{cd:sf3}
}
\subfloat[CoverType]{
\includegraphics[width=1.5in]{./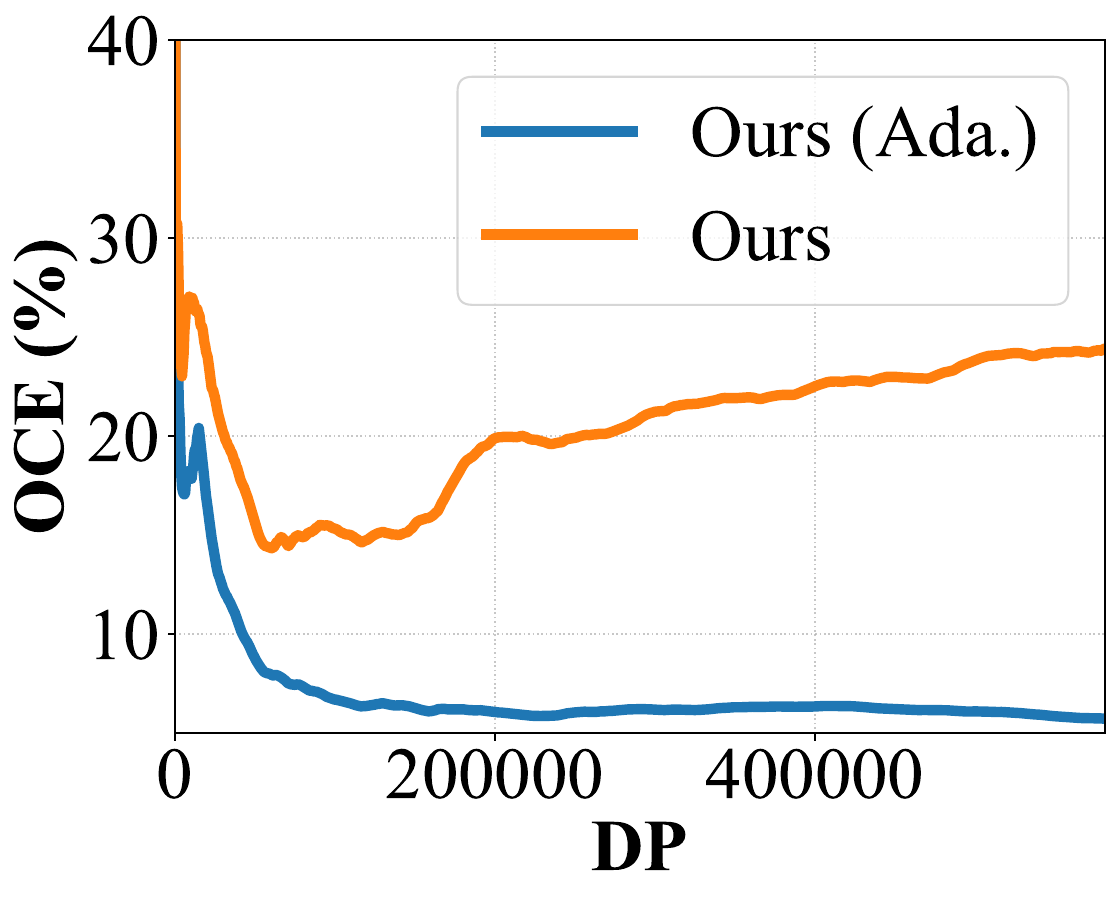}
\label{cd:sf4}
}
\caption{Average and SD of convergence curves of Online-BLS with and without forgetting factor on non-stationary datasets.}
\label{fig:cd}
\end{figure}
From Figure \ref{fig:cd} we find that the average OCE of Online-BLS increases significantly as concept drift occurs, while ours (Ada.) can maintain a low error rate. Thus, we conclude that by adding a forgetting factor to the historical data, our framework can handle the non-stationary data stream classification task well.

\section{Conclusions and Future Works}
This paper presented an accurate and efficient Online-BLS framework to solve the suboptimal model weights problem. On the one hand, an EWEA was proposed to improve model accuracy. By replacing the matrix inverse with Cholesky factorization and forward-backward substitution, our proposed Online-BLS becomes accurate. On the other hand, an EOUS was developed and integrated into EWEA to improve the efficiency of our online update method. Through the integration with EWEA and EOUS, Our Online-BLS framework can be accurate and efficient. Theoretical analysis also proves that Online-BLS has excellent error bound and time complexity. In addition, this paper proposed a simple solution for the concept drift problem by adding a forgetting factor to historical data, which also indicates the flexibility and extensibility of our Online-BLS framework. Experiments on stationary and non-stationary datasets demonstrate the superiority and high efficiency of Online-BLS. This work is the first significant attempt to learn BLS models online. In the future, we will consider the issue of class imbalance in online data streams to improve the model performance further.


\bibliographystyle{named}
\bibliography{ijcai25}

\clearpage

\appendix
\section{Incremental Broad Learning Systems}
In this section, we introduce the vanilla BLS and several existing incremental broad learning algorithms that can be used for online learning tasks. 

\subsection{BLS and I-BLS}
BLS~\cite{7987745} is essentially equivalent to a ridge regression model, except that it utilizes stochastic transformations and nonlinear mapping to extract nonlinear representations. Assume that the first input sample is $\mathbf{x}_1 \in \mathbb{R}^{d}$, where $d$ is the number of feature dimensions. The $i$-th feature node group $\mathbf{z}_{i} \in \mathbb{R}^{n_1}$ is defined as:
\begin{equation}
\label{eq:z_i_2}
\mathbf{z}_{i}^{\top} = \phi(\mathbf{x}_{1}^{\top}\mathbf{W}_{f_i}+\mathbf{\beta}_{f_i}^{\top}), i=1,2,\cdots,n_{2},
\end{equation}
where $\mathbf{W}_{f_i} \in \mathbb{R}^{d \times n_1}$ and $\mathbf{\beta}_{f_i} \in \mathbb{R}^{n_1}$ are randomly generated. $\phi(\cdot)$ represents the selected activation function. Combining all the feature node groups as $\mathbf{z}^{\top} \triangleq [\mathbf{z}_{1}^{\top}, \mathbf{z}_{2}^{\top}, \cdots, \mathbf{z}_{n_2}^{\top}]$, the $j$-th enhancement node group $\mathbf{h}_{j} \in \mathbb{R}^{n_3}$ can be represented as:
\begin{equation}
\label{eq:h_j_2}
\mathbf{h}_{j}^{\top}  = \sigma(\mathbf{z}^{\top}\mathbf{W}_{e_j}+\mathbf{\beta}_{e_j}^{\top}), j=1,2,\cdots,n_{4},
\end{equation}
where $\mathbf{W}_{e_j} \in \mathbf{R}^{n_1n_2 \times n_3}$ and $\mathbf{\beta}_{e_j} \in \mathbb{R}^{n_3}$ are randomly sampled from a given distribution. $\sigma(\cdot)$ denotes a non-linear activation function. Defining broad feature of $\mathbf{x}_1$ as $\mathbf{a}_{1}^{\top} \triangleq [\mathbf{z}^{\top}|\mathbf{h}_{1}^{\top},\cdots,\mathbf{h}_{n_4}^{\top}]$, the final model weights $\mathbf{W}^{(1)}$ are estimated as:
$\mathbf{W}^{(1)} = (\mathbf{a}_{1}^{\top})^{+}\mathbf{y}_{1}^{\top}$,
where $\mathbf{y}_{1}$ denotes the target vector of $\mathbf{x}_1$ and $(\mathbf{a}_1^{\top})^{+}=(\mathbf{a}_{1}\mathbf{a}_{1}^{\top} + \lambda\mathbf{I})^{-1}\mathbf{a}_{1}$.
$\mathbf{I}$ is an identity matrix, and $\lambda$
is used to control the complexity of BLS. When the $k$-th training sample $\mathbf{x}_k$ arrives, its broad feature $\mathbf{a}_k \in \mathbb{R}^m(m \triangleq n_1n_2+n_3n_4)$ is first obtained via substituting $\mathbf{x}_k$ into equations \ref{eq:z_i} and \ref{eq:h_j}. 
Then, the updated weight matrix $\mathbf{W}^{(k)}$ is:
\begin{equation}
\mathbf{W}^{(k)}=\big([\mathbf{a}_1,\mathbf{a}_2,\cdots, \mathbf{a}_k]^{\top}\big)^{+}[\mathbf{y}_1,\mathbf{y}_2,\cdots,\mathbf{y}_k]^{\top}, \notag
\end{equation}
where
$\big([\mathbf{a}_1,\mathbf{a}_2,\cdots, \mathbf{a}_k]^{\top}\big)^{+}=
\big [
\big([\mathbf{a}_1,\mathbf{a}_2,\cdots, \mathbf{a}_{k-1}]^{\top}\big)^{+}-\mathbf{F}\mathbf{D}^{\top} | \mathbf{F}
\big ]$, $\mathbf{D}^{\top}=\mathbf{a}_{k}^{\top}\big([\mathbf{a}_1,\mathbf{a}_2,\cdots, \mathbf{a}_{k-1}]^{\top}\big)^{+}$, $\mathbf{Q}=\mathbf{a}_{k}^{\top}-\mathbf{D}^{\top}[\mathbf{a}_1,\mathbf{a}_2,\cdots, \mathbf{a}_{k-1}]^{\top}$, and
\begin{equation}
\mathbf{F}=
\begin{cases}
\mathbf{Q}^{+} &{\text{if}\ \mathbf{Q} \ne \mathbf{0}} \\
\big([\mathbf{a}_1,\mathbf{a}_2,\cdots, \mathbf{a}_{k-1}]^{\top}\big)^{+}\mathbf{D}(1+\mathbf{D}^{\top}\mathbf{D})^{-1} &{\text{if}\ \mathbf{Q}=\mathbf{0}}. \notag \\
\end{cases}
\end{equation}

\subsection{RI-BLS}
To remedy the drawbacks of long training time and poor accuracy of I-BLS, RI-BLS~\cite{10533441} has been proposed, which devises an update strategy for the weight matrix by recursively computing two memory matrices. During the initialization training phase (i.e., RI-BLS receives the first training sample), the estimated weight is
\begin{equation}
\mathbf{W}^{(1)}=\big(\mathbf{U}^{(1)} + \lambda \mathbf{I}\big)^{-1}\mathbf{V}^{(1)}, \notag
\end{equation}
where $\mathbf{U}^{(1)}=\mathbf{a}_{1}\mathbf{a}_{1}^{\top}$ and $\mathbf{V}^{(1)}=\mathbf{a}_{1}\mathbf{y}_{1}^{\top}$. 

When the $k$-th sample arrives, we have
\begin{equation}
\mathbf{W}^{(k)}=\big(\mathbf{U}^{(k)} + \lambda \mathbf{I}\big)^{-1}\mathbf{V}^{(k)}, \notag
\end{equation}
where $\mathbf{U}^{(k)} = \mathbf{U}^{(k-1)} + \mathbf{a}_{k}\mathbf{a}_{k}^{\top}$ and $\mathbf{V}^{(k)} = \mathbf{V}^{(k-1)} + \mathbf{a}_{k}\mathbf{y}_{k}^{\top}$.

\subsection{BLS-CIL}
Compared to RI-BLS, BLS-CIL~\cite{10086560} has been proposed, which has an additional class-correlation loss function and was initially designed to address class-incremental tasks. By adding a single training sample at a time, BLS-CIL can be well adapted to online classification tasks. For the initialization training phase, the weight matrix $\mathbf{W}^{(1)}$ can be expressed as
$\mathbf{W}^{(1)} = \big(\mathbf{K}^{(1)}\big)^{-1}\mathbf{a}_1\mathbf{y}_{1}^{\top}$,
where $\mathbf{K}^{(1)}=\mathbf{I}+\mathbf{a}_1\mathbf{a}_1^{\top}$. After BLS-CIL receives a new sample $\mathbf{x}_k$, its weight matrix $\mathbf{W}^{(k)}$ can be formalized as
\begin{equation}
\mathbf{W}^{(k)}=\mathbf{W}^{(k-1)}-\big(\mathbf{K}^{(k)}\big)^{-1}\mathbf{H}\mathbf{W}^{(k-1)}+\big(\mathbf{K}^{(k)}\big)^{-1}\mathbf{a}_k\mathbf{y}_k^{\top}, \notag
\end{equation}
where $\mathbf{K}^{(k)}=\mathbf{K}^{(k-1)}+\mathbf{H}$, $\mathbf{H}=\mathbf{a}_{k}\mathbf{a}_{k}^{\top}+\lambda(\mathbf{a}_{k-1}\mathbf{a}_{k-1}^{\top}-\mathbf{a}_{k}\mathbf{a}_{k-1}^{\top}+\mathbf{a}_{k}\mathbf{a}_{k}^{\top}-\mathbf{a}_{k-1}\mathbf{a}_k^{\top})$, and
$\lambda=
\begin{cases}
-\lambda_{1} &{\text{if}\ \mathbf{y}_k \ne \mathbf{y}_{k-1}}\\
+\lambda_{2} &{\text{if}\ \mathbf{y}_k=\mathbf{y}_{k-1}}, \notag\\
\end{cases}$
where $\lambda_{1}$ and $\lambda_{2}$ are positive hyper-parameters which are responsible for intra-class compact term and inter-class sparse term losses, respectively.

\section{Datasets}
A total of $10$ streaming datasets are used in this paper. Among them, there are $6$ stationary datasets and $4$ non-stationary datasets with concept drift. In the following, we introduce them in detail.

\subsection{Stationary Datasets}
\noindent\textbf{Image Segment\footnote{\url{https://archive.ics.uci.edu/dataset/50/image+segmentation}}}: The Image Segment dataset is an image classification dataset described by $19$ high-level numerical attributes. Its samples are randomly sampled from a dataset containing $7$ outdoor images. The images are manually segmented and assigned a class label to each pixel. Each instance contains a $3 \times 3$ region. It has $2,310$ samples containing seven classes $\textit{brickface}$, $\textit{sky}$, $\textit{foliage}$, $\textit{cement}$, $\textit{window}$, $\textit{path}$, and $\textit{grass}$.

\noindent\textbf{USPS\footnote{\url{https://datasets.activeloop.ai/docs/ml/datasets/usps-dataset/}}}: The USPS dataset is used for the task of handwritten digit classification. The USPS contains $9,298$ samples from $10$ classes ranging from $0$ to $9$. Each sample is a $16 \times 16$ grayscale image. To facilitate BLS modeling, we flatten each sample into a one-dimensional vector.

\noindent\textbf{Letter\footnote{\url{https://archive.ics.uci.edu/dataset/59/letter+recognition}}}: The Letter dataset is a capital letter in the English alphabet classification dataset. The Letter dataset contains $20,000$ samples. Each sample contains $16$ primary numeric attributes (statistical moments and edge counts), which are then scaled to an integer value range of $0$ to $15$.

\noindent\textbf{Adult\footnote{\url{https://archive.ics.uci.edu/dataset/2/adult}}}: The Adult dataset is used for a binary classification task, which determines whether a person earns more than $50k$ a year based on $14$ attributes. It has both numeric and categorical attributes. We first convert categorical attributes to integers that the model can handle easily. Then, samples with missing values are removed and the training and test sets are merged. Eventually, the Adult dataset has a total of $45,222$ samples.

\noindent\textbf{Shuttle\footnote{\url{https://archive.ics.uci.edu/dataset/148/statlog+shuttle}}}: The Shuttle dataset is a seven-class dataset, including $\textit{Rad Flow}$, $\textit{Fpv Close}$, $\textit{Fpv Open}$, $\textit{High}$, $\textit{Bypass}$, $\textit{Bpv Close}$, and $\textit{Bpv Open}$. There are $9$ numerical attributes. Among them, the first attribute is time.
Thus, we remove the first column of attributes and use the remaining 8 attributes to predict one of the seven categories.

\noindent\textbf{MNIST\footnote{\url{http://yann.lecun.com/exdb/mnist/}}}: The MNIST dataset is a handwritten digit recognition benchmark. Each sample is a grayscale image with $28 \times 28$ pixels. The training and test sets are merged to yield a dataset containing $70,000$ samples uniformly distributed from 0 to 9. To adapt to our model, each sample is flattened into a vector and each attribute is normalized to be between $0$ and $1$.

\subsection{Non-stationary Datasets}
For non-stationary datasets, we choose two synthetic datasets and two real datasets. These two synthetic datasets, Hyperplane and SEA, are generated using the River\footnote{\url{https://github.com/online-ml/river}} package. The remaining two real datasets are originated from the real world, often with complex unpredictable concept drift.

\noindent\textbf{Hyperplane}: The Hyperplane dataset is an artificial binary classification task. Its task is to divide points in a $d$-dimensional space into two parts by a $d-1$ dimensional hyperplane. A sample is labelled positive if it satisfies $\sum^{d}_{i=1} w_i x_i > w_0$ and negative if the opposite is true. By smoothly changing the parameters of the classification hyperplane, concept drift can be added to the generated data stream. Specifically, following~\cite{su2024elastic}, we generate a data stream containing $20$ features. Its noise level and drift magnitude are set to $1\%$ and $0.5\%$, respectively.

\noindent\textbf{SEA}: The SEA dataset is also an artificial binary classification dataset. Each sample contains three features, of which only the first two are relevant. If the sum of the first two features exceeds a certain threshold, then it is a positive example, otherwise it is a negative example. There are $4$ thresholds to choose from. Concept drift is introduced by switching thresholds at the $25,000$-th, $50,000$-th, and $75,000$-th data points. Moreover, following~\cite{su2024elastic}, the noise level we introduce in the process of data stream generation is $10\%$.

\noindent\textbf{Electricity\footnote{\url{https://sourceforge.net/projects/moa-datastream/files/Datasets/Classification/elecNormNew.arff.zip/download}}}: The Electricity dataset is used as a binary classification task, where the goal is to predict if the electricity price of Australian New South Wales
will go up or down. In this market, prices are not fixed and are affected by demand and supply of the market. They are set every
five minutes. Each sample originally had $8$ attributes. Following~\cite{su2024elastic}, we delete the attributes of date and time located in the first two columns and keep the remaining $6$ attributes. This dataset contains $45,312$ instances and its concepts fluctuate over time.

\noindent\textbf{CoverType\footnote{\url{https://archive.ics.uci.edu/dataset/31/covertype}}}: The CoverType dataset predicts $7$ forest cover types from $54$ feature value of each instance. The feature types include integer and category types. First, we convert the categorical type to an integer for ease of handling. Then, the CoverType dataset has $581,012$ instances with unknown concept drift.

\section{Metrics}
In this section, we introduce the six metrics used in this paper.

\subsection{Online Cumulative Accuracy}
The online cumulative accuracy is used to evaluate the overall performance of online learning models and is defined as
\begin{equation}
OCA=\frac{1}{n}\sum_{k=1}^{n}\mathbb{I}_{(\mathbf{\hat{y}}_k=\mathbf{y}_k)}, \notag
\end{equation}
where $\mathbb{I}$ is the indicator function. $\mathbf{\hat{y}}_{k}$ and $\mathbf{y}_k$ represent the prediction and ground truth of the $k$-th instance, respectively.

\subsection{Online Cumulative Error}
The online cumulative error  is used to evaluate the overall error rate during the online learning process and is defined as follows:
\begin{equation}
OCE=\frac{1}{n}\sum_{k=1}^{n}\mathbb{I}_{(\mathbf{\hat{y}}_k\ne\mathbf{y}_k)}, \notag
\end{equation}
where $\mathbb{I}$ is the indicator function. $\mathbf{\hat{y}}_{k}$ and $\mathbf{y}_k$ represent the prediction and ground truth of the $k$-th instance, respectively. It should be noted that the sum of the online cumulative accuracy and the online cumulative error at the same time step is $1$.

\subsection{Balanced Accuracy}
The balanced accuracy treats all categories equally and thus is more reliable in data streams with imbalanced classes. It is defined as follows:
\begin{equation}
BACC_k=\left (\sum_{i=1}^{c}\frac{n_{c_i}^{T}}{n_{c_i}}\right)\big/c, \notag
\end{equation}
where $n_{c_i}^{T}$ and $n_{c_i}$ denote the number of correctly predicted instances and the total number of instances belonging to the $i$-th class, respectively. 
$c$ represents the total number of classes, and $k$ is the current time step.

\subsection{Average Balanced Accuracy}
The average balanced accuracy is the average of the balanced accuracy at each time step, which can reflect the overall performance of the online learning model throughout the learning process. It has the following definition:
\begin{equation}
AVRBACC=\frac{\sum_{k=1}^{n}BACC_k}{n}, \notag
\end{equation}
where $n$ is the number of instances received by online learning model.
\subsection{F1 Score}
Since most of the datasets we use are class-balanced, F1 in this context refers to Macro F1 Score. Its definition is  
\begin{equation}
F1=\frac{1}{c}\sum_{i=1}^{c}\left(2\frac{R_i \times P_i}{R_i + P_i}\right), \notag
\end{equation}
where $R_i$ and $P_i$ denote the recall and precision of class $i$, respectively.

\subsection{Matthews Correlation Coefficient}
The matthews correlation coefficient is generally considered a better metric than F1 score and accuracy. Its essence is a correlation coefficient value between $-1$ and $1$. The coefficient $+1$, $0$, $-1$ represent a perfect prediction, an average random prediction, and an inverse prediction, respectively. Taking binary classification as an example, its formula is
\begin{align}
MCC&= \notag\\
&\frac{TP \times TN - FP \times FN}{\sqrt{(TP + FP)(TP + FN)(TN + FP)(TN + FN)}}, \notag
\end{align}
where $TP$, $TN$, $FP$, and $FN$ denote the number of true positive, the number of ture negative, the number of false positive, and the number of false negative, respectively. In fact, we call sklearn's function\footnote{\url{https://scikit-learn.org/stable/modules/generated/sklearn.metrics.matthews_corrcoef.html}} to acquire the matthews correlation coefficient.

\end{document}